\newcommand{\myparskip}{3pt}
\title{On the Computational Efficiency of \\
Adaptive and Dynamic Regret Minimization}
\newcommand{\A}{\mathcal{A}}
\newcommand{\K}{\ensuremath{\mathcal K}}
\def\regret{\mbox{{Regret}}}
\newcommand{\ignore}[1]{}
\theoremstyle{plain}
\newtheorem{theorem}{Theorem}
\newtheorem{lemma}[theorem]{Lemma}
\newtheorem{corollary}[theorem]{Corollary}
\newtheorem{proposition}[theorem]{Proposition}
\newtheorem{assumption}{Assumption}
\newtheorem*{theorem*}{Theorem}
\newtheorem*{lemma*}{Lemma}
\newtheorem*{corollary*}{Corollary}
\newtheorem*{proposition*}{Proposition}
\newtheorem*{claim*}{Claim}
\newtheorem*{fact*}{Fact}
\newtheorem*{observation*}{Observation}
\newtheorem*{assumption*}{Assumption}
\theoremstyle{definition}
\newtheorem{definition}[theorem]{Definition}
\newtheorem*{definition*}{Definition}
\newtheorem*{remark*}{Remark}
\newtheorem*{example*}{Example}
 \theoremstyle{plain}
\newtheorem*{theoremaux}{\theoremauxref}
\gdef\theoremauxref{1}
\DeclareMathAlphabet{\mathbfsf}{\encodingdefault}{\sfdefault}{bx}{n}
\def\mA{{\mathcal A}}
\renewcommand{\O}{O}
\newcommand{\E}{\mathbb{E}}
\newcommand{\reals}{\mathbb{R}}
\newcommand{\eps}{\varepsilon}
\let\oldtfrac\tfrac
\renewcommand{\tfrac}[2]{\smash{\oldtfrac{#1}{#2}}}
\let\nablaold\nabla
\renewcommand{\nabla}{\nablaold\mkern-2.5mu}
\author{
  Zhou Lu\thanks{Google AI Princeton} \thanks{Princeton University}\\
  \and
  Elad Hazan\footnotemark[1] \footnotemark[2]\\
}
\begin{document}

\maketitle

\begin{abstract}

   In online convex optimization, the player aims to minimize regret, or the difference between her loss and that of the best fixed decision in hindsight over the entire repeated game.  Algorithms that minimize (standard) regret may converge to a fixed decision, which is undesirable in changing or dynamic environments. This motivates the stronger metrics of performance, notably adaptive and dynamic regret. Adaptive regret is the maximum regret over any continuous sub-interval in time. Dynamic regret is the difference between the total cost and that of the best sequence of decisions in hindsight.  

    State-of-the-art performance in both adaptive and dynamic regret minimization suffers a computational penalty - typically on the order of a multiplicative factor that grows logarithmically in the number of game iterations. In this paper we show how to reduce this computational penalty to be doubly logarithmic in the number of game iterations, and retain near optimal adaptive and dynamic regret bounds. 


\end{abstract}

\section{Introduction}

Online convex optimization is a standard framework for iterative decision making that has been extensively studied and applied to numerous learning settings. In this setting, a player iteratively chooses a point from a convex decision set, and receives loss from an adversarially chosen loss function. Her aim is to minimize her regret, or the difference between her accumulated loss and that of the best fixed comparator in hindsight. 
However, in changing environments regret is not the correct metric, as it incentivizes static behavior \cite{hazan2009efficient}.  

There are two main directions in the literature for online learning in changing environments. 
The early work of \cite{zinkevich2003online} proposed the metric of dynamic regret, which measures the regret vs. the best changing comparator,
$$ \text{D-Regret}(\mA) = \sum_t \ell_t(x_t) - \min_{x_{1:T}^*} \ell_t(x_t^*) . $$
In general, this metric can be linear with the number of game iterations and thus vacuous. However, the dynamic regret can be sublinear, and is usually related to the path length of the  comparator, i.e. 
$$ \mathcal{P} = \sum_t \|x_t^* - x_{t+1}^* \| . $$

An alternative to dynamic regret is adaptive regret, which was proposed in \cite{hazan2009efficient}, a metric closely related to regret in the shifting-experts problem \cite{herbster1998tracking}. Adaptive regret is the maximum regret over any continuous sub-interval in time. This notion has led to algorithmic innovations that yield optimal adaptive regret bounds as well as the best known dynamic regret bounds. 

The basic technique underlying the state-of-the-art methods in dynamic online learning is based on maintaining a set of expert algorithms that have different history lengths, or attention, in consideration. Each expert is a standard regret minimization algorithm, and they are formed into a committee by a version of the multiplicative update method. This methodology is generaly known as Follow-the-Leading-History (FLH) \cite{hazan2009efficient}. It has yielded near-optimal adaptive regret, strongly-adaptive algorithms \cite{daniely2015strongly}, and near-optimal dynamic regret \cite{baby2021optimal} in a variety of settings. 

However,  all previous approaches introduce a significant computational overhead to derive adaptive or dynamic regret bounds.  The technical reasoning is that all previous approaches follow the  method of reduction of FLH, from regret to adaptive regret via expert algorithms. The best known bound on number of experts required to maintain optimal adaptive regret is  $\Theta(\log T)$. Since optimal adaptive and dynamic regret bounds are known, the {\bf main open problem}  in dynamic online learning is  improving the running time overhead.  
This is exactly the question we study in this paper, namely:

\bigskip
\noindent\fbox{\begin{minipage}{\dimexpr\textwidth-2\fboxsep-2\fboxrule\relax}
\centering
Can we improve the computational complexity of adaptive and dynamic regret\\ minimization algorithms for online convex optimization?
\end{minipage}}
\bigskip

Our main result is an exponential reduction in the number of experts required for the optimal adaptive and dynamic regret bounds.  We prove that $O(\log \log T)$ experts are sufficient to obtain near-optimal bounds for general online convex optimization. 


\subsection{Summary of Results}

Our starting point is the approach of \cite{hazan2009efficient} for minimizing adaptive regret: an expert algorithm is applied such that every expert is a (standard) regret minimization algorithm, whose starting point in time differentiates it from the other experts. Instead of restarting an expert every single iteration, previous approaches retain a set of {\it active experts}, and update only these. 

In this paper we study how to maintain this set of active experts. Previous approaches require a set size that is logarithmic in the total number of iterations. We show a trade-off between the regret bound and the number of experts needed. 
By reducing the number of active experts to $O( \frac{\log \log T}{\eps})$, we give an algorithm with an $\tilde{O}(|I|^{\frac{1+\epsilon}{2}})$ adaptive regret. This result improves upon the previous $O(\log T)$ bound, and implies more efficient dynamic regret algorithms as well: for exp-concave and strongly-convex loss, our algorithm achieves $\tilde{O}(\frac{T^{\frac{1}{3}+\epsilon} \mathcal{P}^{\frac{2}{3}}}{\epsilon})$ dynamic regret bounds, using only $O( \frac{\log \log T}{\epsilon})$ experts. 


\begin{table}[ht]
\begin{center}
\begin{tabular}{|c|c|c|}
\hline
Algorithm &  
Regret over $I=[s,t]$  &
Computation 
\\
\hline
\cite{hazan2009efficient}  & $\tilde{O}(\sqrt{T })$ & $\Theta( \log T)$ \\
\hline
\cite{daniely2015strongly}, \cite{jun2017improved} &  $\tilde{O}(\sqrt{|I|})$& $\Theta( \log T)$ \\
\hline
\cite{cutkosky2020parameter}   &  $\tilde{O}(\sqrt{\sum_{\tau=s}^t \|\nabla_{\tau}\|^2 })$& $\Theta( \log T)$ \\
\hline
\cite{lu2022adaptive} &    $ \tilde{O}(\min_H \sqrt{\sum_{\tau=s}^t \|\nabla_{\tau}\|_{H}^{*2} })$ & $\Theta(\log T)$ \\
\hline
This paper &$\tilde{O}(\sqrt{|I|^{1+\epsilon}})$ & $O(\log \log T/\epsilon)$ \\
\hline

\end{tabular}
\caption{Comparison of results on adaptive regret. We evaluate the regret performance of the algorithms on any interval $I=[s,t]$, and the $\tilde{O}$ notation hides other parameters and logarithmic dependence on horizon.}

\label{table:result_summary}
\end{center}
\end{table}

\begin{table}[ht]
\begin{center}
\begin{tabular}{|c|c|c|c|}
\hline
Algorithm &  
Loss Class &
Dynamic Regret &
Computation
\\
\hline
\cite{zinkevich2003online} & General Convex  & $O(\sqrt{\mathcal{P} T })$ & $O(1)$ \\
\hline
\cite{baby2021optimal}  & Exp-concave  & $\tilde{O}(T^{\frac{1}{3}} \mathcal{P}^{\frac{2}{3}})$ & $O(\log T)$ \\
\hline
\cite{baby2022optimal}  & Strongly-convex  & $\tilde{O}(T^{\frac{1}{3}}  \mathcal{P}^{\frac{2}{3}})$ & $O(\log T)$ \\
\hline
This Paper  & Exp-concave/Strongly-convex & $\tilde{O}(T^{\frac{1}{3}+\epsilon} \mathcal{P}^{\frac{2}{3}}/\epsilon)$ & $O(\frac{\log \log T}{\epsilon})$ \\
\hline

\end{tabular}
\caption{Comparison of results on dynamic regret.}

\label{table:result_summary dynamic}
\end{center}
\end{table}

\subsection{Related Works}

For an in-depth treatment of the framework of online convex optimization see \cite{hazan2016introduction}. 

\paragraph{Shifting experts and adaptive regret.}
Online learning with shifting experts were studied in the seminal work of \cite{herbster1998tracking}, and later \cite{bousquet2002tracking}. In this setting, the comparator is allowed to shift $k$ times between the experts, and the regret is no longer with respect to a static expert, but to a $k$-partition of $[1,T]$ in which each segment has its own expert. The algorithm Fixed-Share proposed by \cite{herbster1998tracking} is a variant of the Hedge algorithm \cite{freund1997decision}. On top of the multiplicative updates, it adds a
uniform exploration term to avoid the weight of any expert
from becoming too small. This provably allows a regret bound that tracks the
best expert in any interval. \cite{bousquet2002tracking} improved this method by mixing only with the past posteriors instead of all experts.

The optimal bounds for shifting experts apply to high dimensional continuous sets and structured decision problems and do not necessarily yield efficient algorithms. This is the motivation for adaptive regret algorithms for online convex optimization  \cite{hazan2009efficient} which gave an algorithm called Follow-the-Leading-History with $O(\log^2 T)$ adaptive regret for strongly convex online convex optimization, based on the construction of experts with exponential look-back. However, their bound on the adaptive regret for general convex cost functions was $O( \sqrt{T} \log T)$. Later, \cite{daniely2015strongly} followed this idea and generalized adaptive regret to an universal bound for any sub-interval with the same length. They obtained an improved $O(\sqrt{|I|}\log T  )$ regret bound for any interval $I$. This bound was further improved to $O(\sqrt{|I| \log T  })$ by \cite{jun2017improved} using a coin-betting technique. Recently, \cite{cutkosky2020parameter} achieved a more refined second-order bound $\tilde{O}(\sqrt{\sum_{t\in I}\|\nabla_{t}\|^2})$, and \cite{lu2022adaptive} further improved it to $\tilde{O}(\min_{H \succeq 0, Tr(H)\le d}  \sqrt{\sum_{t\in I} \nabla_{t}^{\top} H^{-1}  \nabla_{t}})$, which matches the  regret of Adagrad \cite{duchi2011adaptive}. However, these algorithms are all based on the initial exponential-lookback technique  and require $\Theta(\log T)$ experts per round, increasing the computational complexity of the base algorithm in their reduction by this factor. 

\paragraph{Dynamic regret minimization.}
The notion of dynamic regret was introduced by \cite{zinkevich2003online}, and allows the comparator to be time-varying with a bounded total movement. The work of \cite{zinkevich2003online} gave an algorithm with an $O(\sqrt{T \mathcal{P}})$ dynamic regret bound where $\mathcal{P}$ denotes the total sequential distance of the moving predictors, also called the ``path length". Although this bound is optimal in general, recently works study improvements of dynamic regret bounds under further assumptions \cite{zhao2020dynamic, zhang2017improved}. In particular, \cite{baby2021optimal, baby2022optimal} achieved an improved $\tilde{O}(T^{\frac{1}{3}}\mathcal{P}^{\frac{2}{3}})$ dynamic regret bound for exp-concave and strongyly-convex online learning, with a matching lower bound $\Omega(T^{\frac{1}{3}}\mathcal{P}^{\frac{2}{3}})$.

Another line of work explores the relationship between these two metrics, and show that adaptive regret implies dynamic regret \cite{zhang2018dynamic}. \cite{zhang2020minimizing} gave algorithms that achieve both adaptive and dynamic regrets simultaneously.

\paragraph{Parameter-free online convex optimizaton.}
Related to adaptivity, an important building block in adaptive algorithms to attain tighter bounds are parameter-free online learning initiated in \cite{mcmahan2010adaptive}. Later parameter-free methods \cite{luo2015achieving, orabona2016coin, cutkosky2020parameter} attained the optimal $\tilde{O}(GD\sqrt{T})$ regret for online convex optimization without knowing any constants ahead of time, and without the usual logarithmic penalty that is a consequence of the  doubling trick.

\paragraph{Applications of adaptive online learning.}
Efficient adaptive and dynamic regret algorithms have implications in many other areas. 
A recent example is the field of online control \cite{hazan2022introduction}. The work of \cite{baby2021optimal}, \cite{baby2022optimal} used adaptive regret algorithms as building-blocks to derive tighter dynamic regret bounds. In this variant of differentiable reinforcement learning, online learning is used to generate iterative control signals, mostly for linear dynamical systems. Recent work by \cite{gradu2020adaptive,minasyan2021online} considered smooth dynamical systems, and their Lyapunov linearization. They use adaptive and dynamic regret algorithms to obtain provable bounds for time-varying systems. Thus, our results imply more efficient algorithms for control. 

Other applications of adaptive algorithms are in the area of time series prediction \cite{koolen2015minimax} and mathematical  optimization \cite{lu2022adaptive}. Our improved computational efficiency for adaptive and dynamic regret implies faster algorithms for these applications as well.

\subsection{Paper Outline}
In Section \ref{s2}, we formally define the online convex optimization framework and the basic assumptions we need. In Section \ref{s3}, we present our algorithm and show a simplified analysis that leads to an $\tilde{O}(|I|^{\frac{3}{4}})$ adaptive regret bound with doubly-logarithmic number of experts. We generalize this analysis and give our main theoretical guarantee in Section 
\ref{s4}. The limit of the FLH framework recursion is discussed in Section \ref{s6}. Implication to more efficient dynamic regret algorithms is presented in Section \ref{s5}.


\section{Setting}\label{s2}
We consider the online convex optimization (OCO) problem. At each round $t$, the player $\A$ chooses $x_t\in \K$ where $\K\subset \reals^d$ is some convex domain. The adversary then reveals loss function $\ell_t(x)$, and the player suffers loss $\ell_t(x_t)$. The goal is to minimize regret:
$$
\regret(\A)=\sum_{t=1}^T \ell_t(x_t)-\min_{x\in \K} \sum_{t=1}^T \ell_t(x) .
$$

A more subtle goal is to minimize the regret over different sub-intervals of $[1,T]$ at the same time, corresponding to a potential changing environment, which is captured by the notion of adaptive regret introduced by \cite{hazan2009efficient}. \cite{daniely2015strongly} extended this notion to depend on the length of sub-intervals, and provided an algorithm that achieves an $\tilde{O}(\sqrt{|I|})$ regret bound for all sub-intervals $I$. In particular, they define strongly adaptive regret as follows:
$$
\text{SA-Regret}(\A,k)=\max_{I=[s,t],t-s=k}\left( \sum_{\tau=s}^t \ell_{\tau}(x_{\tau})-\min_{x\in \K} \sum_{\tau=s}^{t} \ell_{\tau}(x)\right) .
$$

We make the following assumption on the loss $\ell_t$ and domain $\K$, which is standard in literature. 
\begin{assumption}
    Loss $\ell_t$ is convex, $G$-Lipschitz and non-negative. The domain $\K$ has diameter $D$. 
\end{assumption}

We define the path length $\mathcal{P}$ of a dynamic comparator $\{x_t^*\}$ w.r.t. some norm $\| \cdot \|$.
$$
\mathcal{P} = \sum_t \|x_t^* - x_{t+1}^* \| 
$$

We also define strongly-convex and exp-concave functions here.

\begin{definition}
A function $f(x)$ is $\lambda$-strongly-convex if for any $x,y\in \mathcal{K}$, the following holds:
$$
f(y)\ge f(x)+\nabla f(x)^{\top}(y-x)+\frac{\lambda}{2}\|x-y\|_2^2
$$
\end{definition}

\begin{definition}
A function $f(x)$ is $\alpha$-exp-concave if $e^{-\alpha f(x)}$ is a convex function. 
\end{definition}

\section{A More Efficient Adaptive Regret Algorithm}\label{s3}

\begin{algorithm}[t]
\caption{Efficient Follow-the-Leading-History (EFLH) - Basic Version}
\label{alg1}
\begin{algorithmic}[1]
\STATE Input: OCO algorithm $\mA$, active expert set $S_t$.
\STATE Let $\mA_t$ be an instance of $\mA$ in initialized at time $t$. Initialize the set of active experts: $S_1=\{1\}$, with initial weight $w_1^{(1)}=\frac{1}{2GD}$.
\STATE Pruning rule: for $k\ge 1$, the lifespan $l_t$ of $\A_t$ with integer $t=r 2^{2^k-1}$ is $2^{2^k+1}$ ($=4$ if $2\nmid t$), where $2^{2^k+1}\nmid t$. "Deceased" experts will be removed from the active expert set $S_t$.
\FOR{$t = 1, \ldots, T$}
\STATE Let $W_t=\sum_{j\in S_t} w_t^{(j)}$.
\STATE Play $x_t=\sum_{j\in S_t} \frac{w_t^{(j)}}{W_t} x_t^{(j)}$, where $x_t^{(j)}$ is the prediction of $\mA_j$.
\FOR { $j\in S_t$ }
\STATE $$
w_{t+1}^{(j)}=w_t^{(j)} \left(1+\frac{1}{GD} \min\left\{\frac{1}{2},\sqrt{\frac{\log T}{l_j}}\right\} (\ell_t(x_t)-\ell_t(x_t^{(j)})) \right)
$$
\ENDFOR
\STATE Update $S_t$ according to the pruning rule and add $t + 1$ to get $S_{t+1}$. Initialize 
$$
w_{t+1}^{t+1}=\frac{1}{GD} \min\left\{\frac{1}{2},\sqrt{\frac{\log T}{l_{t+1}}}\right\}
$$
\ENDFOR
\end{algorithmic}
\end{algorithm}

The Follow-the-Leading History (FLH) algorithm \cite{hazan2009efficient}  achieved $\tilde{O}(\sqrt{T})$ adaptive regret by initiating different OCO algorithms at each time step, then treating them as experts and running a multiplicative weight method to choose between them. This can be extended to attain $\tilde{O}(\sqrt{|I|})$ adaptive regret bound by using parameter-free OCO algorithms as experts, or by setting the $\eta$ in the multiplicative weight algorithm as in \cite{daniely2015strongly}.

Although these algorithms can achieve a near-optimal $\tilde{O}(\sqrt{|I|})$ adaptive regret, they have to use $\Theta(\log T)$  experts per round. We propose a more efficient algorithm \ref{alg1} which achieves vanishing regret and uses only $O(\log \log T)$ experts. 

The intuition for our algorithm stems from the FLH method, in which the experts' lifespan is of form $2^k$. We denote lifespan as the length of the interval that this expert is run on (it also chooses its parameters optimally according to its lifespan). This leads to $\Theta(\log T)$ number of active experts per round, and we could potentially improve it to $O(\log \log T)$ if we change the lifespan to be $2^{2^k}$. While this does increase the regret, we achieve an $\tilde{O}(|I|^{\frac{3}{4}})$  regret bound. The formal regret guarantee is given below.

\begin{theorem}\label{main}
When using OGD as the expert algorithm $\mA$, Algorithm \ref{alg1} achieves the following adaptive regret bound over any interval $I\subset [1,T]$ for genral convex loss
$$ \text{SA-Regret}(\mA) = 36 GD \sqrt{\log T} \cdot |I|^{\frac{3}{4}}$$
with $O(\log \log T)$ experts.
\end{theorem}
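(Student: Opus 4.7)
The plan is to follow the FLH-style reduction: decompose the regret on $I$ as (a) the meta-regret of the multiplicative-weights aggregator against each expert in a carefully chosen \emph{cover} of $I$, plus (b) the OGD regret of each such expert on its portion of the cover. The twist is that the double-exponential lifespan schedule forces a balance between these two terms: I will pick a cover of size $O(\sqrt{|I|})$ with average lifespan $O(\sqrt{|I|})$, so that each contribution scales like $O(|I|^{3/4}\sqrt{\log T})$ rather than the usual $O(\sqrt{|I|\log T})$.

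First I would verify the bookkeeping items. The active-set bound $|S_t| = O(\log\log T)$ is immediate from the pruning rule, since expert lifespans lie in $\{2^{2^k+1}\}_{k\ge 0}$ and $2^{2^k+1} \le 2T$ forces $k = O(\log\log T)$ levels, with a bounded number of live experts per level at any round. A standard Hedge potential-function analysis with the per-expert step-size $\eta_j = \min(\tfrac{1}{2}, \sqrt{\log T/l_j})$, together with the initial weight $w_{t+1}^{t+1}$ chosen in the algorithm, yields meta-regret $O(GD\sqrt{l_j\log T})$ of the aggregator against $\mathcal{A}_j$ on any sub-interval of $\mathcal{A}_j$'s lifespan. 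Since $\mathcal{A}_j$ is OGD tuned to its lifespan, its own regret on any prefix is $O(GD\sqrt{l_j})$, so each piece of the cover costs $O(GD\sqrt{l_j\log T})$.

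The main combinatorial step is the cover. Given $I$ with $|I| = L$, let $k^*$ be the largest $k$ with $\ell := 2^{2^{k^*}+1} \le L$. The recursion $2^{2^{k+1}+1} = (2^{2^k+1})^2/2$ forces $L < \ell^2/2$, hence $\ell \ge \sqrt{2L}$. Since level-$k^*$ starting times are multiples of $\ell/4$, I can tile a central block of $I$ end-to-end by $O(L/\ell)$ level-$k^*$ experts, leaving two boundary segments of total length at most $2\ell$, which I recursively cover by experts at strictly lower levels (whose lifespans shrink as a square root of the previous). Summing the bulk contribution $(L/\ell)\cdot O(GD\sqrt{\ell\log T}) = O(GD(L/\sqrt{\ell})\sqrt{\log T})$ and using $\ell \ge \sqrt{2L}$ gives $O(GD\,L^{3/4}\sqrt{\log T})$; the boundary terms form a rapidly convergent geometric series of the same order, and explicit tracking of constants yields the stated $36\,GD\sqrt{\log T}\,|I|^{3/4}$ bound.

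The main obstacle I anticipate is a clean accounting of the boundary pieces. Because consecutive lifespans differ by a square-root factor rather than a constant, a naive bound on the boundary could match or even exceed the bulk; what saves us is that the recursion on boundary pieces contracts faster than geometrically in the lifespan scale, so the total boundary regret is bounded by a constant multiple of the bulk. The other ingredients -- the active-set count, the Hedge meta-regret, and the OGD base regret -- are standard modulo notation and should not pose difficulty.
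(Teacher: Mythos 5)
Your proposal is correct in its essentials but follows a genuinely different route from the paper. The paper does not tile the interval: it peels off a single suffix $[i,t]$ of length at least $\sqrt{|I|}/2$ that is covered by one expert from birth (Lemma \ref{p1}), bounds the regret on that suffix by $6GD\sqrt{\log T}\sqrt{t-i}$ (expert regret plus multiplicative-weights regret, Lemma \ref{mw regret}), and then closes an induction on $|I|$ via the algebraic recursion $6x^{3/4}\ge 6(x-\sqrt{x}/2)^{3/4}+(\sqrt{x}/2)^{1/2}$ (Lemma \ref{tech}). You instead cover $I$ by $O(L/\ell)$ disjoint lifespan intervals at the top level $\ell\in[\sqrt{2L},L]$ plus recursively covered boundary pieces, and sum the per-tile costs $O(GD\sqrt{\ell\log T})$ --- a geometric-covering argument in the spirit of \cite{daniely2015strongly} rather than the FLH-style peeling of \cite{hazan2009efficient}. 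Both decompositions are valid (regret over a disjoint cover against a fixed comparator is subadditive, and your identity $2^{2^{k+1}+1}=(2^{2^k+1})^2/2$ correctly yields $\ell>\sqrt{2L}$, hence bulk cost $L/\sqrt{\ell}=O(L^{3/4})$). Your approach makes the source of the exponent $3/4$ more transparent and avoids the paper's quartic-polynomial verification in Lemma \ref{tech}; the paper's induction, in exchange, generalizes mechanically to the $2^{(1+\epsilon)^k}$ schedule of Theorem \ref{main2}. Two points you should tighten: (i) the meta-regret bound $O(GD\sqrt{l_j\log T})$ is established (via the pseudo-weight potential $\tilde W_t\le t$) only for intervals that \emph{begin at the expert's birth}, not for arbitrary sub-intervals of its lifespan --- your tiling only uses birth-prefixes, so this is fine, but the claim as stated is too strong; and (ii) when $\ell$ is close to $L$ the ``central block'' may contain zero complete tiles and the entire interval becomes ``boundary,'' so the boundary recursion is not a small correction but can carry the whole bound (it still works, since the level-$(k^*-1)$ cover of a length-$O(\ell)$ segment costs $O(\ell^{3/4})=O(L^{3/4})$ and subsequent levels decay doubly exponentially), and consequently the explicit constant $36$ is not actually verified by your accounting and would need to be re-derived for the covering argument.
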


We use the Online Gradient Descent \cite{zinkevich2003online} algorithm in the theorem for its $\frac{3}{2}GD\sqrt{T}$ regret bound, see \cite{hazan2016introduction}.The sub-optimal $\tilde{O}(|I|^{\frac{3}{4}})$ rate can be improved to be closer to the optimal $\tilde{O}(|I|^{\frac{1}{2}})$ rate while still using only $O(\log \log T)$ number of experts. We will discuss this improvement in the next section.

\subsection{Proof of Theorem \ref{main}}
    We use OGD as the base algorithm $\A$.
    Without loss of generality we only need to consider intervals with length at least 8. The proof idea is to derive a recursion of regret bounds, and use induction on the interval length. The key observation is that, due to the double-exponential construction of interval lengths, for any interval $[s,t]$, it's guaranteed that a sub-interval in the end with length at least $\sqrt{t-s}/2$ is covered by some expert. In the meantime, the number of 'active' experts per round is at most $O(\log \log T)$. We formalize the above observation in the two following lemmas.
    \begin{lemma}\label{p1}
        For any interval $I=[s,t]$, there exists an integer $i\in [s,t-\sqrt{t-s}/2]$, such that $\mA_i$ is alive throughout $[i,t]$.
    \end{lemma}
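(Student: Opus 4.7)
Let $M := t - s$ and, for $k \ge 0$, set $L_k := 2^{2^k+1}$, giving the doubly-exponential sequence of admissible lifespans $L_0 = 4,\, L_1 = 8,\, L_2 = 32,\, L_3 = 512,\,\ldots$. The starting point is the pruning rule: every integer $i$ with $2$-adic valuation $v_2(i) = 2^k - 1$ is an odd multiple of $2^{2^k - 1}$ but not a multiple of $2^{2^k+1}$, and is therefore assigned lifespan exactly $L_k$. The goal is to produce $i \in [s, t - \sqrt{M}/2]$ whose lifespan $l_i$ is large enough that $\mA_i$ is still alive at round $t$, i.e.\ $i + l_i - 1 \ge t$.

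First I would pick the smallest $k \ge 0$ with $L_k \ge \sqrt{M}$. A direct computation from the definition gives the self-similar identity $L_k = L_{k-1}^2/2$, so the minimality of $k$ forces $L_{k-1} < \sqrt{M}$, which after squaring and halving yields the sandwich
\[
\sqrt{M} \;\le\; L_k \;<\; M/2.
\]
In particular $L_k \le M$, so the window $W := [\,t - L_k + 1,\ t - \sqrt{M}/2\,]$ is contained in $[s,\, t - \sqrt{M}/2]$.

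Next I would locate $i \in W$ with $v_2(i) = 2^k - 1$ by a density argument: such integers are exactly the odd multiples of $2^{2^k-1}$, hence are equally spaced at distance $2^{2^k} = L_k/2$, so any run of $L_k/2$ consecutive integers must contain one. The window $W$ has length at least $L_k - \sqrt{M}/2 \ge L_k/2$ (using $L_k \ge \sqrt{M}$), so the desired $i$ exists, and by the pruning rule has $l_i = L_k$. The bound $i \ge t - L_k + 1$ then gives $i + l_i - 1 \ge t$, so $\mA_i$ is alive throughout $[i,t]$, while $i \le t - \sqrt{M}/2$ by construction.

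The main obstacle is the sparseness of admissible lifespans: consecutive $L_k$'s jump by the enormous factor $2^{2^k}$, so one cannot hope for a lifespan placed arbitrarily close to $\sqrt{M}$, and the obvious attempt of "just pick $L_k \approx \sqrt{M}$" is not literally possible. The recursion $L_k = L_{k-1}^2/2$ is exactly what rescues the argument -- the instant $L_k$ overshoots $\sqrt{M}$, it is automatically still below $M/2$, which is precisely enough room to fit a sub-window of length $L_k/2$ that matches the spacing of admissible starting times. Minor rounding slack and the small-$M$ boundary (where $k=0$, i.e.\ the odd-$i$, lifespan-$4$ case already suffices) are absorbed by the standing assumption $|I| \ge 8$.
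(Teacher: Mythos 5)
Your proof is correct and follows essentially the same route as the paper's: pick the doubly-exponential scale $k$ so that the lifespan $2^{2^k+1}$ lands between $\sqrt{t-s}$ and $O(t-s)$, then use the fact that odd multiples of $2^{2^k-1}$ (which the pruning rule assigns lifespan exactly $2^{2^k+1}$) are spaced $2^{2^k}$ apart to place one in the window just before $t$. The paper instead names two explicit candidate multiples and argues one of them is odd, but this is the same mechanism; your window-length/spacing formulation is simply tidier bookkeeping of the identical idea.
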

    \begin{proof}
        Assume $2^{2^k}\le t-s\le 2^{2^{k+1}}$, then $\sqrt{t-s}/2\le 2^{2^k-1}$. Notice that $t\ge 2^{2^k}+1$. Assume $r\ge 2$ is the largest integer such that $r 2^{2^k-1}\le t$, then one of $i=(r-1) 2^{2^k-1}$ and $i=(r-2) 2^{2^k-1}$ is satisfactory because its lifespan is $ 2^{2^k+1}\ge 3\times 2^{2^k-1}$. The reason we consider two candidate $i$ is that when $r\ge 2$, one of $r-2$ and $r-1$ is odd and we use that to guarantee the lifespan isn't strictly larger than $ 2^{2^k+1}$ (such choice also excludes the potential bad case $r-2=0$). 
    \end{proof}
    
    In fact, Lemma \ref{p1} implies an even stronger argument for the coverage of $ [t-\sqrt{t-s}/2,t]$, that is $2^{2^k-1}\le t-i\le 2^{2^k+1}$, and as a result $\eta=\frac{1}{\sqrt{2^{2^k+1}}}$ is optimal (up-to-constant) for this chosen expert. This property means that we don't need to tune $\eta$ optimally for the length $\sqrt{t-s}/2$, but only need to tune $\eta$ with respect to the lifespan of the expert itself. For example, the OGD algorithm $\A_i$ achieves (nearly) optimal regret on $[i,t]$ as well because the optimal learning rate for $[i,t]$ is the same as that for $[i,i+l_i]$ up to a constant factor of 2. To see this, notice that $l_i\ge t-i$ and $t-i\ge \frac{l_i}{4}$.
    
    \begin{lemma}\label{p2}
        $|S_t|=O(\log \log T)$.
    \end{lemma}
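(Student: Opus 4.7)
The plan is to partition the active experts by a ``level'' $k$ determined by the pruning rule, then bound the number of levels and the number of active experts at each level separately.

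First, I would assign every expert $\mA_j$ a level $k \geq 0$ according to its lifespan: level $0$ consists of all odd $j$, which have lifespan $4 = 2^{2^0 + 1}$, and for $k \geq 1$ level $k$ consists of those $j = r \cdot 2^{2^k - 1}$ satisfying $2^{2^k + 1} \nmid j$, whose lifespan is exactly $2^{2^k + 1}$. Each starting time belongs to a unique level determined by the $2$-adic valuation of $j$, and a level-$k$ expert $\mA_j$ is alive only on the window $[j,\, j + 2^{2^k + 1} - 1]$.

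Next, I would bound the number of active level-$k$ experts at any fixed time $t$. Since every level-$k$ starting time is a multiple of $2^{2^k - 1}$, consecutive level-$k$ starting times are spaced by at least $2^{2^k - 1}$. The level-$k$ experts alive at $t$ are precisely those whose starting time lies in $(t - 2^{2^k + 1},\, t]$, which is a window of length $2^{2^k + 1}$. The number of multiples of $2^{2^k - 1}$ in such a window is at most $\lceil 2^{2^k + 1}/2^{2^k - 1}\rceil + 1 = 5$, so each level contributes only $O(1)$ active experts.

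Finally, I would bound the number of nonempty levels. Level $k$ can only be populated if some valid starting time $j = r \cdot 2^{2^k - 1} \leq T$ exists, which forces $2^{2^k - 1} \leq T$ and hence $k \leq \log_2(1 + \log_2 T) = O(\log \log T)$. Multiplying the number of levels by the constant per-level bound gives $|S_t| = O(\log \log T)$.

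The main obstacle is getting the level assignment cleanly stated and verifying that the pruning rule truly partitions the starting times, so that the per-level count is genuinely $O(1)$ rather than accidentally inflated by overlapping categories; once the partitioning is properly justified from the divisibility conditions in the pruning rule, the remaining arithmetic is routine and the doubly-logarithmic bound drops out immediately.
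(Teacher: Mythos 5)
Your proof is correct and follows essentially the same approach as the paper's: group active experts by lifespan level $k$, note that only $O(\log\log T)$ levels can exist since $2^{2^k-1}\le T$, and observe that within each level the starting times are spaced $2^{2^k-1}$ apart while the survival window has length $2^{2^k+1}$, so only $O(1)$ experts per level are alive at any time. Your per-level constant of $5$ is slightly looser than the paper's $4$ (a half-open window of length $4d$ contains exactly $4$ multiples of $d$), but this is immaterial to the bound.
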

    \begin{proof}
    At any time up to $T$, there can only be $O(\log \log T)$ different lifespans sizes by the algorithm definition.
        Notice that  for any  $k$, though the total number of experts with lifespan of $2^{2^k+1}$ might be large, the number of active experts with lifespan of $2^{2^k+1}$ is only at most 4 which concludes the proof. 
    \end{proof}
    
    Lemma \ref{p2} already proves the efficiency claim of Theorem \ref{main}. To bound the regret we make an induction on the length of interval $|I|$. Let $2^{2^k}\le |I|\le 2^{2^{k+1}}$, we will prove by induction on $|I|$. We need the following technical lemma on the recursion of regret.
    \begin{lemma}\label{tech}
    For any $x\ge 1$, we have that
    $$6 x^{\frac{3}{4}}\ge 6(x-x^{\frac{1}{2}}/2)^{\frac{3}{4}}+(x^{\frac{1}{2}}/2)^{\frac{1}{2}} $$
    \end{lemma}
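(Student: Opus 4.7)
The inequality is of the form $6[x^{3/4} - (x - x^{1/2}/2)^{3/4}] \geq (x^{1/2}/2)^{1/2}$, so I would prove it by producing a clean lower bound on the left-hand side that beats the right-hand side for all $x \geq 1$.

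The plan is to exploit concavity of the function $f(y) = y^{3/4}$. Its derivative $f'(y) = \tfrac{3}{4} y^{-1/4}$ is positive and strictly decreasing on $(0,\infty)$. In particular, for any $h \in [0, x]$ we have
\begin{equation*}
f(x) - f(x-h) \eq \int_{x-h}^{x} f'(y)\,dy \geq f'(x)\cdot h \eq \frac{3 h}{4 x^{1/4}}.
\end{equation*}
Applied with $h = x^{1/2}/2$ (which is at most $x$ whenever $x \geq 1$, so $x - h \geq 0$ and the quantity is well defined), this yields
\begin{equation*}
x^{3/4} - \bigl(x - x^{1/2}/2\bigr)^{3/4} \geq \frac{3}{4 x^{1/4}} \cdot \frac{x^{1/2}}{2} \eq \frac{3 x^{1/4}}{8}.
\end{equation*}

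Multiplying by $6$, I obtain
\begin{equation*}
6 x^{3/4} - 6\bigl(x - x^{1/2}/2\bigr)^{3/4} \geq \frac{9\, x^{1/4}}{4}.
\end{equation*}
Comparing with the right-hand side $(x^{1/2}/2)^{1/2} = x^{1/4}/\sqrt{2}$, it suffices to observe the numerical inequality $\tfrac{9}{4} \geq \tfrac{1}{\sqrt{2}}$, which is clear since $\tfrac{9}{4} = 2.25 > 0.71 > \tfrac{1}{\sqrt{2}}$. This completes the proof.

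There is essentially no obstacle: the only thing to verify carefully is that the domain restriction $x \geq 1$ guarantees $x - x^{1/2}/2 \geq 0$ (indeed $x^{1/2}/2 \leq x/2 \leq x$), so that all exponents are applied to nonnegative arguments and the concavity argument is valid throughout the relevant range.
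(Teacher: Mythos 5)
Your proof is correct, and it takes a genuinely different route from the paper's. You lower-bound the increment $x^{3/4}-(x-h)^{3/4}$ with $h=x^{1/2}/2$ via the first-order concavity estimate $f(x)-f(x-h)=\int_{x-h}^{x}f'(y)\,dy\geq f'(x)\,h$ for $f(y)=y^{3/4}$, obtaining the clean bound $\tfrac{3}{8}x^{1/4}$ on the gap and then comparing constants ($\tfrac{9}{4}$ versus $\tfrac{1}{\sqrt{2}}$). The paper instead substitutes $y=(x^{1/2}/2)^{1/2}$, raises both sides to the fourth power to reduce the claim to a polynomial inequality in $y$, and argues via monotonicity of that polynomial plus a base-case evaluation. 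Your argument is shorter, avoids the fourth-power expansion entirely, and makes the slack in the constant $6$ visible (you only need $6\cdot\tfrac{3}{8}\geq\tfrac{1}{\sqrt{2}}$); it also covers the full range $x\geq 1$ uniformly, whereas the paper's monotonicity step is phrased for $y\geq 1$, which under the substitution corresponds to $x\geq 4$ and so leaves the small range $1\leq x<4$ to be checked separately. Notably, your derivative/concavity technique is essentially the same one the paper itself adopts for the generalized version of this lemma (the $\epsilon$-dependent recursion in the appendix, proved via $(1-u)^{\beta}\leq e^{-\beta u}\leq 1-\beta u/2$), so your approach also generalizes more directly to that setting.
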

    \begin{proof}
        Let $y=(x^{\frac{1}{2}}/2)^{\frac{1}{2}}$, after simplification the above inequality becomes 
        \begin{align*}
            &6 x^{\frac{3}{4}}\ge 6(x-x^{\frac{1}{2}}/2)^{\frac{3}{4}}+(x^{\frac{1}{2}}/2)^{\frac{1}{2}}\\
            \iff &12\sqrt{2}y^3\ge 6(4y^4-y^2)^{\frac{3}{4}} +y\\
            \iff &(12\sqrt{2}y^3-y)^4\ge 6^4 (4y^4-y^2)^3\\
            \iff & (12\sqrt{2}y^2-1)^4 \ge 1296y^2(4y^2-1)^3\\
            \iff & (62208-13824\sqrt{2})y^6 -13824y^4+(1296-48\sqrt{2})y^2+1\ge 0
        \end{align*}
        The derivative of the LHS is non-negative because $y\ge 1$ and $62208-13824\sqrt{2}\ge 13824$. This proves the LHS is monotonely increasing in $y$, and we only need to prove its non-negativity when $y=1$, which can be verified by straight calculation. 
    \end{proof}
    
    The first step is to derive a regret bound on the sub-interval $[i,t]$ which is covered by a single expert $\A_i$. The regret on $[i,t]$ can be decomposed as the sum of the expert regret and the multiplicative weight regret to choose that best expert in the interval. The expert regret is upper bounded by $3 GD \sqrt{t-i}$ due to the optimality of $\A_i$ while the multiplicative weight regret can be upper bounded by $3GD \sqrt{\log T(t-i)}$ as shown in the following lemma, the proof is left to the appendix.
    
    \begin{lemma}\label{mw regret}
        For the $i$ and $\A_i$ chosen in Lemma \ref{p1}, the regret of Algorithm \ref{alg1} over the sub-interval $[i,t]$ is upper bounded by $3GD \sqrt{t-i}+3GD\sqrt{\log T(t-i)}$.
    \end{lemma}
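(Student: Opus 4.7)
The plan is to decompose the regret on $[i,t]$ into a meta-regret term against the chosen expert $\A_i$ plus the expert's own regret:
\[
\sum_{\tau=i}^{t}\bigl(\ell_\tau(x_\tau)-\ell_\tau(x^*)\bigr) \;=\; \underbrace{\sum_{\tau=i}^{t}\bigl(\ell_\tau(x_\tau)-\ell_\tau(x_\tau^{(i)})\bigr)}_{\text{meta regret}} \;+\; \underbrace{\sum_{\tau=i}^{t}\bigl(\ell_\tau(x_\tau^{(i)})-\ell_\tau(x^*)\bigr)}_{\text{expert regret}} .
\]
The expert regret is handled first. Since $\A_i$ is OGD initialized at time $i$ with learning rate tuned to its lifespan $l_i$, and the remark following Lemma \ref{p1} gives $l_i/4 \le t-i \le l_i$, this learning rate differs from the optimum for the interval $[i,t]$ by at most a factor of $2$. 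Feeding this mistuning through the standard $\tfrac{3}{2}GD\sqrt{T}$ regret bound for OGD loses only a constant factor and yields the claimed $3GD\sqrt{t-i}$ upper bound on the expert regret.

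For the meta regret, I would follow the standard multiplicative-weights argument of FLH specialized to the single surviving expert $\A_i$. Unrolling the update rule,
\[
w_{t+1}^{(i)} \;=\; w_i^{(i)}\,\prod_{\tau=i}^{t}\Bigl(1+\eta_i\bigl(\ell_\tau(x_\tau)-\ell_\tau(x_\tau^{(i)})\bigr)\Bigr), \qquad \eta_i \;:=\; \tfrac{1}{GD}\min\!\bigl\{\tfrac12,\sqrt{\log T/l_i}\bigr\} ,
\]
taking logs and applying $\log(1+x)\ge x-x^2$ for $x\in[-\tfrac12,\tfrac12]$ (valid because $|\eta_i(\ell_\tau(x_\tau)-\ell_\tau(x_\tau^{(i)}))|\le \tfrac12$) yields a lower bound on $\log\bigl(w_{t+1}^{(i)}/w_i^{(i)}\bigr)$ of the form $\eta_i\cdot(\text{meta regret}) - \eta_i^2 G^2D^2 (t-i)$. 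The matching upper bound comes from tracking $W_\tau=\sum_{j\in S_\tau}w_\tau^{(j)}$: convexity of the prediction rule (with $x_\tau$ being the weighted average of the $x_\tau^{(j)}$) ensures $W_{\tau+1}\le W_\tau$ from the multiplicative step, and each newly initialized expert contributes at most $\tfrac{1}{GD}\min\{\tfrac12,\sqrt{\log T/l_{t+1}}\}\le 1$ to $W_\tau$. Together these give $w_{t+1}^{(i)}\le W_{t+1}=O(1)$. Combining the lower and upper bounds and solving for the meta regret yields
\[
\sum_{\tau=i}^{t}\bigl(\ell_\tau(x_\tau)-\ell_\tau(x_\tau^{(i)})\bigr) \;\le\; \frac{\log(1/w_i^{(i)})}{\eta_i} + \eta_i G^2 D^2 (t-i),
\]
which, after substituting the choice of $\eta_i$ and using $t-i\ge l_i/4$, collapses to $3GD\sqrt{(t-i)\log T}$ once constants are tracked.

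The main obstacle is the bookkeeping around the changing active-expert set: experts are created and pruned as $\tau$ traverses $[i,t]$, so the ``total mass'' argument needed to upper-bound $w_{t+1}^{(i)}$ has to account for weight injected at each initialization. The resolution, standard in FLH-style analyses, is that each initialization contributes only a controlled amount, and the doubly-exponential spacing of lifespans ensures the total injected mass sums to a constant. Because the expert of interest $\A_i$ itself survives throughout $[i,t]$ by Lemma \ref{p1}, no further subtlety arises in applying the per-expert MW regret bound.
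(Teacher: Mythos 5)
Your decomposition (expert regret plus multiplicative-weights meta-regret), your treatment of the expert term via the factor-$2$ mistuning of $\eta$, and your use of $\log(1+x)\ge x-x^2$ on the unrolled weight of $\A_i$ all match the paper's proof. The gap is in the potential argument you use to upper-bound $w_{t+1}^{(i)}$, and it is broken in two places. First, the claim that the multiplicative step alone gives $W_{\tau+1}\le W_\tau$ does not follow from convexity: the update multiplies $w_\tau^{(j)}$ by $1+\eta_j(\ell_\tau(x_\tau)-\ell_\tau(x_\tau^{(j)}))$ with a \emph{different} $\eta_j$ per expert, so the change in $W_\tau$ is $\sum_j w_\tau^{(j)}\eta_j(\ell_\tau(x_\tau)-\ell_\tau(x_\tau^{(j)}))$; convexity only controls the $w_\tau^{(j)}$-weighted sum $\sum_j w_\tau^{(j)}(\ell_\tau(x_\tau)-\ell_\tau(x_\tau^{(j)}))\le 0$, and reweighting the mixed-sign terms by heterogeneous $\eta_j$ can make the sum positive. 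Second, the claim that the injected mass ``sums to a constant'' thanks to the doubly-exponential spacing is false: Algorithm \ref{alg1} initializes a fresh expert at \emph{every} round $t+1$ with weight $\tfrac{1}{GD}\min\{\tfrac12,\sqrt{\log T/l_{t+1}}\}$ (the spacing controls only how many experts are simultaneously alive, not how many are ever created), so the cumulative injection is $\Theta(t/GD)$, not $O(1)$.

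Both defects are repaired by the paper's pseudo-weight normalization $\tilde w_\tau^{(j)}\eqdef w_\tau^{(j)}/\eta_j$ with $\eta_j=\tfrac{1}{GD}\sqrt{\log T/l_j}$: the multiplicative step then changes $\tilde W_\tau$ by exactly $\sum_j w_\tau^{(j)}(\ell_\tau(x_\tau)-\ell_\tau(x_\tau^{(j)}))\le 0$, and each new expert enters with pseudo-weight exactly $1$, giving the induction $\tilde W_t\le t$. The resulting upper bound on $\log\tilde w_{t+1}^{(i)}$ is $\log(t+1)=O(\log T)$ rather than $O(1)$, which after dividing by $\eta_i$ and using $l_i/4\le t-i\le l_i$ still yields the stated $3GD\sqrt{\log T\,(t-i)}$. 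You should also handle the regime $\sqrt{\log T/l_i}>\tfrac12$ separately (as the paper does): there $\eta_i=\tfrac{1}{2GD}$ and your term $\eta_i G^2D^2(t-i)$ is linear in $t-i$, which only collapses to $O(GD\sqrt{\log T\,(t-i)})$ after observing that in this regime $t-i\le l_i=O(\log T)$.
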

    
    Now we have gathered all the pieces we need to prove our induction.
    \paragraph{Base case:} for $|I|=1$, the regret is upper bounded by $GD \le 36 GD \sqrt{\log T} \cdot 1^{\frac{3}{4}}$. 
    \paragraph{Induction step:} suppose for any $|I|<m$ we have the regret bound in the statement of theorem. Consider now $t-s=m$, from Lemma \ref{p1} we know there exists an integer $i\in [s,t-\sqrt{t-s}/2]$, such that $\mA_i$ is alive throughout $[i,t]$. Algorithm \ref{alg1} guarantees an 
    $$3GD \sqrt{t-i}+3 GD\sqrt{\log T(t-i)}\le  6GD \sqrt{\log T} (t-i)^{\frac{1}{2}}$$
    regret over $[i,t]$ by Lemma \ref{mw regret}, and by induction the regret over $[s,i]$ is upper bounded by\newline $36 GD \sqrt{\log T} (i-s)^{\frac{3}{4}}$. By the monotonicity of the function $f(y)=6(x-y)^{\frac{3}{4}}+\sqrt{y}$ when the variable $y\ge \sqrt{x}/2$, we reach the desired conclusion by using Lemma \ref{tech}:
    $$
    6(t-i)^{\frac{1}{2}}+36 (i-s)^{\frac{3}{4}}\le 6(\frac{\sqrt{t-s}}{2})^{\frac{1}{2}}+36 (t-s-\frac{\sqrt{t-s}}{2})^{\frac{3}{4}}\le 36 (t-s)^{\frac{3}{4}}
    $$
    To see the monotonicity, we use the fact $y\ge \sqrt{x}/2$ to see that
    $$
    f'(y)=\frac{1}{2\sqrt{y}}-\frac{9}{2 (x-y)^{\frac{1}{4}}}
        \le \frac{1}{2\sqrt{y}}-\frac{9}{2 x^{\frac{1}{4}}}
        \le \frac{1}{2\sqrt{y}}-\frac{9}{2 \sqrt{2y}}\le 0
    $$

\section{Approaching the Optimal Rate}\label{s4}
\begin{algorithm}[t]
\caption{Efficient Follow-the-Leading-History (EFLH) - Full Version}
\label{alg2}
\begin{algorithmic}[1]
\STATE Input: OCO algorithm $\mA$, active expert set $S_t$, horizon $T$ and constant $\epsilon>0$.
\STATE Pruning rule: let $\mA_{(t,k)}$ be an instance of $\mA$ initialized at $t$ with lifespan $4 l_k=4\lfloor 2^{(1+\epsilon)^k}/2 \rfloor+4$, for $2^{(1+\epsilon)^k}/2\le T$. "Deceased" experts will be removed from the active expert set $S_t$.

\STATE Initialize: $S_1=\{(1,1),(1,2),...\}$, $w_1^{(1,k)}=\frac{1}{GD} \min \left\{\frac{1}{2}, \sqrt{\frac{\log T}{l_k}}\right\}$.
\FOR{$t = 1, \ldots, T$}
\STATE Let $W_t=\sum_{(j,k)\in S_t} w_t^{(j,k)}$.
\STATE Play $x_t=\sum_{(j,k)\in S_t} \frac{w_t^{(j,k)}}{W_t} x_t^{(j,k)}$, where $x_t^{(j,k)}$ is the prediction of $\mA_{(j,k)}$.
\STATE Perform multiplicative weight update to get $w_{t+1}$. For $(j,k)\in S_t$
$$
w_{t+1}^{(j,k)}=w_t^{(j,k)} \left(1+\frac{1}{GD} \min \left\{\frac{1}{2},\sqrt{\frac{\log T}{l_k}}\right\} (\ell_t(x_t)-\ell_t(x_t^{(j,k)})) \right)
$$
\STATE Update $S_t$ according to the pruning rule. Initialize 
$$
w_{t+1}^{(t+1,k)}=\frac{1}{GD} \min \left\{\frac{1}{2}, \sqrt{\frac{\log T}{l_k}}\right\}
$$
if $(t+1,k)$ is added to $S_{t+1}$ (when $l_k | t$).
\ENDFOR
\end{algorithmic}
\end{algorithm}

The basic approach given in the previous section achieves vanishing adaptive regret with only $O(\log \log T)$ number of experts, improving the efficiency of previous works \cite{hazan2009efficient,daniely2015strongly}.  In this section, we extend the basic version of Algorithm \ref{alg1} and show how to achieve an $\tilde{O}(|I|^{\frac{1+\epsilon}{2}})$ adaptive regret bound with $O( \log \log T/\eps)$ number of experts. 


The intuition stems from the recursion of regret bounds. Suppose the construction of our experts guarantees that for any interval with length $x$, there exists a sub-interval with length $\Theta(x^{\alpha})$ in the end which is covered by some expert with the same initial time, for some constant $\alpha \ge 0$. Then similarly, we need to solve the recursion of a regret bound function $g$ such that
$$
g(x)\ge g(x-x^{\alpha})+x^{\frac{\alpha}{2}} ,
$$
which approximately gives the solution of $g(x)=\Theta(x^{1-\frac{\alpha}{2}})$. To approach the optimal rate we set $\alpha=1-\epsilon$, giving an $\tilde{O}(|I|^{\frac{1+\epsilon}{2}})$ regret bound. It remains to describe an explicit construction that guarantees a covering with $\alpha=1-\epsilon$.

Suppose our construction contains experts with lifespan of the form $f(n)$, then it's equivalent to require that $f(n+1)^{1-\epsilon}\sim f(n)$ which is approximately $f(n+1)\sim f(n)^{1+\epsilon}$. Initializing $f(1)=2$, for example, gives an alternative choice of double-exponential lifespan $2^{(1+\epsilon)^k}$. 

We also need to slightly modify how we define the experts and the pruning rule, since $2^{(1+\epsilon)^k}$ isn't necessarily an integer now. Define $l_k=\lfloor 2^{(1+\epsilon)^k}/2 \rfloor+1$, we hold experts with lifespan $4 l_k$ for every $k$ satisfying $2^{(1+\epsilon)^k}/2\le T$. Additionally, we initialize an expert with lifespan $4 l_k$ at time $t$ if $l_k \mid (t-1)$, notice that this might create multiple experts with the same initial time point in contrast to Algorithm \ref{alg1}. The resulting Algorithm \ref{alg2} has the following regret guarantee.

\begin{theorem}\label{main2}
When using OGD as the expert algorithm $\mA$, Algorithm \ref{alg2} achieves the following adaptive regret bound over any interval $I\subset [1,T]$ for genral convex loss
$$ \text{SA-Regret}(\mA) = 48cGD \sqrt{\log T} |I|^{\frac{1+\epsilon}{2}}$$
with $O(\log \log T/\epsilon)$ experts.
\end{theorem}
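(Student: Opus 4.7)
The strategy is to re-run the inductive argument of Theorem~\ref{main} with the finer lifespan sequence $\{4l_k\}$, $l_k = \lfloor 2^{(1+\epsilon)^k}/2\rfloor + 1$, targeting the recursion $R(m) \ge R(m - \Theta(m^{1-\epsilon})) + O(\sqrt{\log T\cdot m^{1-\epsilon}})$, whose natural solution is $R(m) = O\bigl(m^{(1+\epsilon)/2}\sqrt{\log T}\bigr)$. Three structural ingredients need to be re-established: a covering lemma (analogue of Lemma~\ref{p1}), an efficiency lemma (analogue of Lemma~\ref{p2}), and a multiplicative-weights meta-regret bound on the covered sub-interval (analogue of Lemma~\ref{mw regret}); the closing step is an induction in place of Lemma~\ref{tech}.

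\paragraph{Covering and efficiency.}
For $I = [s,t]$ of sufficient length, I would pick $k$ to be the largest integer with $4 l_k \le |I|$. Since $l_{k+1}$ is, up to a constant, $l_k^{1+\epsilon}$, the defining inequalities give $l_k \ge c_1 |I|^{1/(1+\epsilon)} \ge c_1 |I|^{1-\epsilon}$ (using $(1-\epsilon)(1+\epsilon) \le 1$). Algorithm~\ref{alg2} initializes an expert $\mA_{(\cdot,k)}$ at every $i$ with $l_k \mid (i-1)$, so at least one such $i$ lies in $[s, s + l_k]$, and because the lifespan $4 l_k$ exceeds $|I|$, that expert stays alive throughout $[i,t]$. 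The covered suffix satisfies $t-i \ge \Omega(|I|^{1-\epsilon})$ and $t-i \le 5 l_k$, so the preset learning rate $\sqrt{\log T / l_k}$ is within a constant factor of the $[i,t]$-optimal rate for OGD. The efficiency bound is immediate: at most $4$ experts with a given index $k$ are simultaneously alive, and the admissible values of $k$ are those satisfying $2^{(1+\epsilon)^k}/2 \le T$, of which there are $\log_{1+\epsilon}\log_2(2T) = O(\log\log T/\epsilon)$.

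\paragraph{Meta-regret and induction.}
The meta-regret of the multiplicative weights update against expert $\mA_{(i,k)}$ on $[i,t]$ is $O(GD\sqrt{(t-i)\log T})$ by the standard Hedge analysis with learning rate $\sqrt{\log T/l_k}$, and OGD's within-expert regret on $[i,t]$ adds $O(GD\sqrt{t-i})$, giving total regret $O(GD\sqrt{(t-i)\log T})$ on $[i,t]$. Setting $R(m) = C m^{(1+\epsilon)/2}$ with a suitable $C = \Theta(GD\sqrt{\log T})$ (absorbed into the constant $48c$) and decomposing $[s,t] = [s,i]\cup[i,t]$, the induction reduces the theorem to verifying, for $y = t-i \ge c_1 m^{1-\epsilon}$,
\begin{equation*}
C(m-y)^{(1+\epsilon)/2} + C_0\sqrt{y\log T} \le C m^{(1+\epsilon)/2}.
\end{equation*}
Concavity of $x \mapsto x^{(1+\epsilon)/2}$ yields the tangent-line bound $m^{(1+\epsilon)/2} - (m-y)^{(1+\epsilon)/2} \ge \tfrac{1+\epsilon}{2}\, y\, m^{(\epsilon-1)/2}$, and substituting $y \ge c_1 m^{1-\epsilon}$ reduces the required inequality to an absolute statement of the form $C(1+\epsilon)\sqrt{y}\, m^{(\epsilon-1)/2}/2 \ge C_0 \sqrt{\log T}$, which holds for the chosen $C$.

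\paragraph{Main obstacle.}
The delicate point is the bookkeeping of the recursion under the $\epsilon$-parameterization: in Theorem~\ref{main} the exponent $\alpha = 1/2$ is fixed and Lemma~\ref{tech} is an explicit algebraic identity, while here the per-step gain $m - (m-y) = \Omega(m^{1-\epsilon})$ is only a vanishing $m^{-\epsilon}$-fraction of $m$, and the recursion must be solved uniformly in $\epsilon$. I expect the main technical work to be (i) verifying the monotonicity of $f(y) = C(m-y)^{(1+\epsilon)/2} + C_0\sqrt{y\log T}$ on $y \ge c_1 m^{1-\epsilon}$, so that the worst case is attained at the lower endpoint of the feasible range, and (ii) handling the base case for small $m$ by a direct $mGD$ bound before the induction kicks in. Everything else is a careful upgrade of the Theorem~\ref{main} proof with the lifespan sequence $\{2^{2^k}\}$ replaced by $\{2^{(1+\epsilon)^k}\}$.
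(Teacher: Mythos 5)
Your overall route is the same as the paper's: a covering lemma for the lifespans $4l_k$, a $O(\log\log T/\epsilon)$ count of simultaneously alive experts, the $O(GD\sqrt{(t-i)\log T})$ meta-regret plus $O(GD\sqrt{t-i})$ OGD regret on the covered suffix, and an induction closing the recursion $C(m-y)^{(1+\epsilon)/2}+C_0\sqrt{y\log T}\le Cm^{(1+\epsilon)/2}$. Your tangent-line bound $m^{(1+\epsilon)/2}-(m-y)^{(1+\epsilon)/2}\ge \frac{1+\epsilon}{2}\,y\,m^{(\epsilon-1)/2}$ is a clean equivalent of the paper's Lemma \ref{tech new} (which proves the same estimate via $e^{-u}\le 1-u/2$), and it in fact makes your worry (i) about monotonicity unnecessary: the resulting sufficient condition $C\frac{1+\epsilon}{2}\sqrt{y}\,m^{(\epsilon-1)/2}\ge C_0\sqrt{\log T}$ is itself monotone in $y$, so the lower endpoint $y\ge c_1 m^{1-\epsilon}$ is automatically the worst case.

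The one genuine problem is the covering step, which as written is self-contradictory. You choose $k$ to be the \emph{largest} index with $4l_k\le |I|$, place the expert's start $i$ in $[s,s+l_k]$ (near the \emph{left} end of $I$), and then justify that it survives until $t$ ``because the lifespan $4l_k$ exceeds $|I|$'' --- which directly contradicts $4l_k\le |I|$. Moreover your claims $i\in[s,s+l_k]$ and $t-i\le 5l_k$ together force $|I|\le 6l_k$, which need not hold for your choice of $k$: since $l_{k+1}\approx l_k^{1+\epsilon}$, the gap between consecutive admissible lifespans is huge, so $|I|$ can sit anywhere in $[4l_k,\,4l_k^{1+\epsilon}]$ and an expert born near $s$ dies long before $t$. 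The fix is what the paper's Lemma \ref{p1 new} does: pick $k$ with $l_k\le (t-s)^{1-\epsilon}/2\le 4l_k$ and take $i$ to be an initialization point (of the form $jl_k+1$) within distance $O(l_k)$ of $t$, i.e., near the \emph{right} end; then the lifespan $4l_k\ge t-i$ keeps $\mA_{(i,k)}$ alive on $[i,t]$, the suffix has length $t-i=\Theta(l_k)=\Theta(|I|^{1-\epsilon})$ with the preset rate $\sqrt{\log T/l_k}$ optimal up to a factor $2$, and the long prefix $[s,i]$ is the part handled by the induction hypothesis. With that correction (your choice of $k$ would also work if $i$ is moved to the right end, since it only makes the covered suffix longer), the rest of your argument goes through and matches the paper's proof.
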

The proof is essentially the same as that of Theorem \ref{main} which we leave to appendix, the main new step is to derive a generalized version of Lemma \ref{tech}, which roughly says that
$$
x^{\frac{1-\epsilon}{2}}=O(x^{\frac{1+\epsilon}{2}}-(x-x^{1-\epsilon})^{\frac{1+\epsilon}{2}})
$$

\section{Limits of the History Lookback Technique}\label{s6}

\begin{figure}[ht]
    \centering
    \includegraphics[scale=0.6]{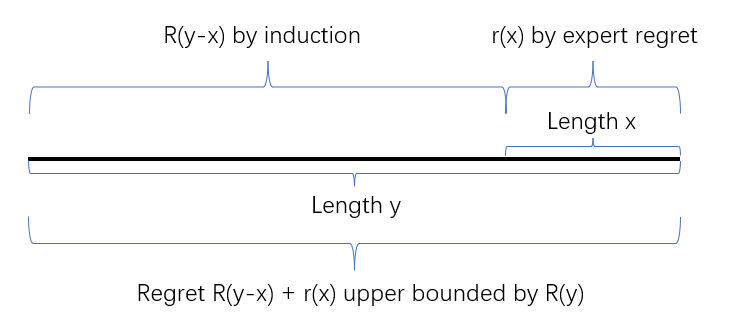}
    \caption{Illustration of the history lookback technique}
    \label{fig:log}
\end{figure}

In this section we discuss the limitation of the history lookback technique, which is used to derive all our results. The basic idea of the history lookback technique is to use recursion to bound the adaptive regret: for any interval of length $y$, there is guaranteed to be an expert initiated in the end covering a smaller interval with length $x(y)$. The expert guarantees some regret $r(x)$ over the small interval, and we denote the regret over the rest of the large interval as $R(y-x)$.

Now $R(y-x)+r(x)$ becomes a regret bound over the large interval, and we would like to find $R$ satisfying $R(y-x)+r(x)\le R(y)$, allowing us to use induction on the length of interval to get an adaptive regret bound $R(I)$ for any interval with length $I$. The result of \cite{hazan2009efficient} for general convex loss, for example, can be interpreted as a special case of setting $r(x)=\log T \sqrt{x}$, $x(y)=\frac{y}{4}$ and $R(x)=5\log T \sqrt{x}$.

Typically the function $r(x)$ is determined by the problem itself. Still, the interval length evolution $x(y)$ is adjustable, and we aim to find the smallest $x(y)$ (which means most efficient) that maintains a near-optimal $R(x)$. We would like to find tight trade-off between $R$ and $x$ in the following inequality:
$$
R(y-x(y))+r(x(y))\le R(y)
$$
In particular, we are interested in how small $x(y)$ can be when $r(x)=\sqrt{x}$, while maintaining $R(x)=o(x)$. We have the following impossibility result. 
\begin{proposition}\label{prop:lb}
Suppose $r(x)=C_1 \sqrt{x}$, when $0<x(y)<\min\{C_2 y^{\frac{1}{n}}, \frac{y}{2}\}$ for $y>1$ with some constants $C_1>0, C_2 \ge 1$, then any $R(\cdot)$ satisfying
$$
R(y-x(y))+r(x(y))\le R(y)
$$
must be lower bounded by $R(y)\ge \frac{C_1}{2\sqrt{C_2}} y^{1-\frac{1}{2n}}$.
\end{proposition}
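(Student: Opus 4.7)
The plan is to iterate the functional inequality along the trajectory $y_0 = y$, $y_{i+1} = y_i - x(y_i)$, and to lower-bound the accumulated $r$-contributions by an integral. Applying the recursion while $y_i > 1$, and using the natural assumption $R \ge 0$ at the terminal value (implicit in any regret bound), one obtains
$$
R(y) \ge \sum_{i=0}^{k-1} r(x(y_i)) = C_1 \sum_{i=0}^{k-1} \sqrt{x(y_i)},
$$
where $k$ is the first index with $y_k \le 1$.

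The key rewriting is to convert each summand into a form where the constraint on $x(y_i)$ becomes useful. Noting that $C_1 \sqrt{x} = \frac{C_1\, x}{\sqrt{x}}$ and plugging in $\sqrt{x(y_i)} < \sqrt{C_2}\, y_i^{1/(2n)}$ produces
$$
R(y) > \frac{C_1}{\sqrt{C_2}}\sum_{i=0}^{k-1} \frac{y_i - y_{i+1}}{y_i^{1/(2n)}}.
$$

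I would then compare this discrete sum with the integral $\int_{y_k}^{y_0} z^{-1/(2n)}\,dz$. Since $z^{-1/(2n)}$ is decreasing and, by the second constraint $x(y) \le y/2$, consecutive grid points satisfy $y_{i+1} \ge y_i/2$, each cell obeys $\int_{y_{i+1}}^{y_i} z^{-1/(2n)}\,dz \le \sqrt{2}\,(y_i - y_{i+1})\, y_i^{-1/(2n)}$ for $n \ge 1$. Telescoping across $i = 0, \ldots, k-1$ and integrating gives
$$
R(y) \ge \frac{C_1}{\sqrt{2\, C_2}\,(1 - 1/(2n))}\left(y^{1-1/(2n)} - 1\right),
$$
and absorbing the lower-order $-1$ into the leading term (valid once $y$ exceeds a mild constant) yields a lower bound of the claimed form $\Omega(y^{1-1/(2n)})$.

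The main obstacle is tracking the absolute constant carefully enough to land at $\tfrac{1}{2\sqrt{C_2}}$ rather than something worse: the Riemann-sum-to-integral passage costs at most a factor of $\sqrt{2}$ precisely because the constraint $x(y) \le y/2$ keeps consecutive $y_i$'s within a factor of two, and the power-rule integral of $z^{-1/(2n)}$ is what supplies the exponent $1-\frac{1}{2n}$. Everything else is algebraic bookkeeping plus handling the terminal segment of the iteration where $y_i$ first drops below one.
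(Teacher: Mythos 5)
Your proof is correct in substance but takes a genuinely different route from the paper's. The paper argues by a direct one-step induction: it guesses the closed form $R(y)\ge \frac{C_1}{2\sqrt{C_2}}y^{1-\frac{1}{2n}}$, substitutes it into $R(y)\ge R(y-x)+C_1\sqrt{x}$, and verifies self-consistency using $(1-\frac{x}{y})^{1-\frac{1}{2n}}\ge 1-\frac{2(1-\frac{1}{2n})x}{y}$ together with $\sqrt{x}\le\sqrt{C_2}\,y^{\frac{1}{2n}}$; this is a local "guess and verify" that delivers the stated constant for every $y$ and never needs to control the whole trajectory. You instead unroll the recursion along $y_{i+1}=y_i-x(y_i)$ and lower-bound the accumulated gains by $\int z^{-1/(2n)}\,dz$, which is more transparent about where the exponent $1-\frac{1}{2n}$ comes from (it is the antiderivative of $z^{-1/(2n)}$) and cleanly separates the roles of the two hypotheses: $x<C_2y^{1/n}$ converts each $\sqrt{x(y_i)}$ into $(y_i-y_{i+1})\,y_i^{-1/(2n)}/\sqrt{C_2}$, and $x<y/2$ bounds the Riemann-sum-to-integral loss by $2^{1/(2n)}\le\sqrt{2}$. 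Two caveats keep this from being a verbatim proof of the statement. First, as you acknowledge, your final bound $\frac{C_1}{\sqrt{2C_2}(1-1/(2n))}\bigl(y^{1-1/(2n)}-1\bigr)$ only dominates $\frac{C_1}{2\sqrt{C_2}}y^{1-1/(2n)}$ once $y^{1-1/(2n)}\ge (1-1/\sqrt{2})^{-1}\approx 3.42$ (about $y\ge 12$ when $n=1$), so you recover the proposition's asymptotic content --- which is all the paper actually uses --- but not the literal constant for all $y>1$. Second, your unrolling presumes the trajectory reaches $y_k\le 1$; the hypotheses only require $x(y)>0$, so in principle the iterates could stall at some $y_\infty>1$ and your integral would cover only $[y_\infty,y]$, which can make the bound vacuous if $y_\infty$ is close to $y$. (The paper's integer induction quietly suffers from the same defect, since its hypothesis must be invoked at the real point $y-x(y)$; in the intended application $x(y)=\Theta(y^{1/n})$ and the trajectory does terminate, so this is best viewed as an imprecision in the statement rather than a flaw specific to your argument.) With those two points flagged, your approach is sound and arguably more informative about the mechanism behind the bound.
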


Proposition \ref{prop:lb} indicates that we cannot maintain computation better than double-log and vanishing regret at the same time. For example, if we take interval length to be of form $2^{2^{2^k}}$ (corresponding to $y=2^{2^{2^{k+1}}}$ and $x=\Omega(2^{2^{2^k}})$) such that the number of experts is just $O(\log \log \log T)$, it leads to an undesirable regret bound $R(I)=\O(I^{1-\frac{1}{2\log T}})$. 

Using the same approach, we can also find the optimal $x(y)$ when the expert regret bound $r(x)=x^{\alpha}$ and the desired adaptive regret bound $R(x)=x^{\beta}$ are given.

\begin{proposition} \label{cor:lb}
Let $r(x)=x^{\alpha}, R(x)=x^{\beta}$ where $0\le \alpha\le \frac{1}{2}, \alpha < \beta <1$. 
Then choosing $x(y)=\Theta(y^{\frac{1-\beta}{1-\alpha}})$ guarantees the following:
\begin{enumerate}
    \item $ r(x(y))\le \frac{2}{\beta}[R(y)-R(y-x(y))]$ is satisfied, thus the adaptive regret is bounded by $ O( \frac{x^\beta}{\beta})$. 
    
    \item
The computational overhead is  $O(\frac{1-\beta}{\beta-\alpha}\log \log T)$.
\end{enumerate}
Meanwhile, any choice of $x(y)=o(y^{\frac{1-\beta}{1-\alpha}})$ will violate the first property.
\end{proposition}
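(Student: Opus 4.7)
The plan is to specialize the recursion framework of Proposition \ref{prop:lb} to the pure power-law case $r(x)=x^{\alpha}$, $R(x)=x^{\beta}$, and to extract matching upper and lower bounds on the admissible covering length $x(y)$. A key algebraic identity that makes everything go through cleanly is
\[
\beta-1+\gamma \;=\; \alpha\gamma, \qquad \gamma := \tfrac{1-\beta}{1-\alpha},
\]
which follows immediately from the definition of $\gamma$ and ensures that, once $x(y)=\Theta(y^{\gamma})$ is substituted, the two sides of the recursion inequality $r(x(y))\le \tfrac{2}{\beta}[R(y)-R(y-x(y))]$ scale with exactly the same power of $y$.

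For the first property, since $R(x)=x^{\beta}$ is concave ($\beta<1$) and $R'(x)=\beta x^{\beta-1}$ is decreasing, the mean value theorem gives
\[
\beta\,y^{\beta-1}x(y) \;\le\; R(y)-R(y-x(y)) \;\le\; \beta\,(y-x(y))^{\beta-1}x(y).
\]
For $x(y)\le y/2$ the upper bound is at most $2^{1-\beta}\beta\,y^{\beta-1}x(y)$, so the target inequality reduces, up to absolute constants, to $x(y)^{1-\alpha}\ge \tfrac{1}{2}y^{1-\beta}$, i.e.\ $x(y)=\Theta(y^{\gamma})$. The resulting adaptive regret bound $R(y)=O(y^{\beta}/\beta)$ then follows by induction on the interval length exactly as in the proof of Theorem \ref{main2}, with the factor $\tfrac{2}{\beta}$ absorbing the telescoping losses.

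For the computational overhead, I would construct expert lifespans $l_k$ so that the end-subinterval covered by a lifespan-$l_{k+1}$ expert, of size $\Theta(l_{k+1}^{\gamma})$ under the above choice of $x(\cdot)$, matches $l_k$. This yields the recursion $l_{k+1}=\Theta(l_k^{1/\gamma})$ and hence $l_k=c^{(1/\gamma)^k}$ for some $c>1$. The number of distinct lifespans with $l_k\le T$ is therefore $O(\log\log T/\log(1/\gamma))$, and since $1/\gamma=1+\tfrac{\beta-\alpha}{1-\beta}$ gives $\log(1/\gamma)=\Omega(\min(1,(\beta-\alpha)/(1-\beta)))$, this yields the claimed $O(\tfrac{1-\beta}{\beta-\alpha}\log\log T)$ overhead in the regime of small gap $\beta-\alpha$.

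For the tightness claim, suppose $x(y)=o(y^{\gamma})$ and write $x(y)=\varepsilon(y)\,y^{\gamma}$ with $\varepsilon(y)\to 0$. Using the identity $\beta-1+\gamma=\alpha\gamma$, we have $y^{\beta-1}x(y)=\varepsilon(y)\,y^{\alpha\gamma}$ while $r(x(y))=\varepsilon(y)^{\alpha}\,y^{\alpha\gamma}$, so their ratio equals $\varepsilon(y)^{\alpha-1}$, which tends to infinity since $\alpha<1$. Combined with the MVT upper bound $R(y)-R(y-x(y))=O(y^{\beta-1}x(y))$, this forces $r(x(y))\le \tfrac{2}{\beta}[R(y)-R(y-x(y))]$ to fail for all sufficiently large $y$. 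The main technical obstacle is tracking multiplicative constants cleanly in the first step so that the $\tfrac{2}{\beta}$ slack actually closes the induction; the rest is essentially the exponent-counting already carried out for $\beta=\tfrac{1+\epsilon}{2}$ in the proof of Theorem \ref{main2}.
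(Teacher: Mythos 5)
Your proposal is correct and follows essentially the same route as the paper: both hinge on the exponent identity $\beta-1+\gamma=\alpha\gamma$ for $\gamma=\frac{1-\beta}{1-\alpha}$ together with a first-order lower bound on $y^{\beta}-(y-x)^{\beta}$ (you via the mean value theorem, the paper via $(1-u)^{\beta}\le 1-\beta u/2$), and both count lifespans of the form $c^{(1/\gamma)^k}$ to get the $O(\frac{1-\beta}{\beta-\alpha}\log\log T)$ overhead. The only (harmless) divergence is in the tightness claim, where you directly show the ratio $r(x)/[R(y)-R(y-x)]=\Theta(\varepsilon(y)^{\alpha-1})$ blows up, whereas the paper routes this through the induction of Proposition \ref{prop:lb}; your version is, if anything, more self-contained.
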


The proof to Proposition \ref{prop:lb} and Proposition \ref{cor:lb} are deferred it to the appendix. As an implication, the case of $\alpha=0$ will be used in the next section to derive more efficient dynamic regret algorithms.
\section{Efficient Dynamic Regret Minimization} \label{s5}
\begin{algorithm}[h!]
\caption{Efficient Follow-the-Leading-History (EFLH) - Exp-concave Version}
\label{alg3}
\begin{algorithmic}[1]
\STATE Input: OCO algorithm $\mA$, active expert set $S_t$, horizon $T$, exp-concave parameter $\alpha$ and $\epsilon>0$.
\STATE Pruning rule: let $\mA_{(t,k)}$ be an instance of $\mA$ initialized at $t$ with lifespan $4 l_k=4\lfloor 2^{(1+\epsilon)^k}/2 \rfloor+4$, for  only the largest $l_k|t-1$ satisfying $2^{(1+\epsilon)^k}/2\le T$. 

\STATE Initialize: $S_1=\{(1,1),(1,2),...\}$, $w_1^{(1,k)}=\frac{1}{|S_1|}$.
\FOR{$t = 1, \ldots, T$}
\STATE Play $x_t=\sum_{(j,k)\in S_t} w_t^{(j,k)} x_t^{(j,k)}$, where $x_t^{(j,k)}$ is the prediction of $\mA_{(j,k)}
$
\STATE Perform multiplicative weight update to get $w_{t+1}$. For $(j,k)\in S_t$
$$
\hat{w}_{t+1}^{(j,k)}=\frac{w_t^{(j,k)} e^{-\alpha \ell_t(x_t^{(j,k)})}}{\sum_{(i,k)\in S_t}w_t^{(i,k)} e^{-\alpha \ell_t(x_t^{(i,k)})}}
$$
\STATE Update $S_t$ according to the pruning rule. Set and update for all $j\le t$
$$w_{t+1}^{(t+1,k)}=\frac{1}{t+1} \ , \ w_{t+1}^{(j,k)}=(1-\frac{1}{t+1})\hat{w}_{t+1}^{(j,k)}, $$
where $w_{t+1}^{(t+1,k)}$ is the weight of the newly added expert $\mA_{(t+1,k)}$.
\ENDFOR
\end{algorithmic}
\end{algorithm}

In this section we show how to achieve near optimal dynamic regret with a more efficient algorithm as compared to  state of the art.  When the loss functions are exp-concave or strongly-convex, running Algorithm \ref{alg3} with experts being Online Newton Step (ONS) \cite{hazan2007logarithmic} or Online Gradient Decent (OGS) respectively gives near-optimal dynamic regret bound. 

Algorithm \ref{alg3} is a simplified version of Algorithm \ref{alg2}. The main difference is that Algorithm \ref{alg3} does not require learning rate tuning, since we no longer need interval length dependent regret bounds as in the general convex case. 

\begin{theorem}
Algorithm \ref{alg3} achieves the following dynamic regret bound for exp-concave (with $\mA$ being ONS) or strongly convex (with $\mA$ being OGD) loss functions
$$ \text{D-Regret}(\mA) = \sum_t \ell_t(x_t) - \min_{x_{1:T}^*} \ell_t(x_t^*)= \tilde{O}(\frac{T^{\frac{1}{3}+\epsilon} \mathcal{P}^{\frac{2}{3}}}{\epsilon}) ,$$
where $\mathcal{P}=\sum_{t=1}^T \|x_{t+1}^*-x_t^*\|_1$. Further, the number of active experts is  $O(\frac{\log \log T}{\epsilon})$. 
\end{theorem}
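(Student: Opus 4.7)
The plan is to split the proof into two parts: (i) establish that Algorithm \ref{alg3} achieves a strongly-adaptive regret bound of $\tilde{O}(|I|^\epsilon/\epsilon)$ on every sub-interval $I \subseteq [1,T]$ when the base expert $\mA$ is ONS (for exp-concave losses) or OGD (for strongly-convex losses), and then (ii) apply a partition-based reduction from adaptive to dynamic regret that exploits the curvature of the losses.

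For part (i), the base algorithm attains a logarithmic static regret $r(x) = \tilde{O}(\log x)$ on any sub-interval of length $x$. The meta-level update in lines 7-9 is the classical FLH Hedge-over-exp-concave-experts rule, which is learning-rate free and contributes only $O(\log t) = \tilde{O}(1)$ meta-regret per layer. I would replay the double-exponential pruning recursion used in the proof of Theorem \ref{main2}, with the polynomial expert regret $\sqrt{x}$ replaced by this logarithmic bound, matching the $\alpha = 0$ regime of Proposition \ref{cor:lb}. Setting $\beta = \epsilon$ and relying on the covering property that any interval of length $y$ contains a terminal sub-interval of length $\Theta(y^{1-\epsilon})$ owned by a single expert (the analogue of Lemma \ref{p1} for the lifespan sequence $4l_k$), the recursion $R(y) \ge R(y - y^{1-\epsilon}) + \tilde{O}(\log T)$ telescopes via the same geometric summation as in Proposition \ref{cor:lb}, yielding $R(y) = \tilde{O}(y^\epsilon \log T / \epsilon)$. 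The expert count $O(\log\log T/\epsilon)$ is inherited from the counting argument of Lemma \ref{p2}.

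For part (ii), partition $[1,T]$ into $K$ equal intervals $I_1,\ldots,I_K$ of length $L = T/K$, introduce a per-interval static comparator $u_k$ drawn from the optimal trajectory within $I_k$, and decompose
\begin{align*}
\text{D-Regret} = \sum_k \Bigl(\sum_{t \in I_k} \ell_t(x_t) - \sum_{t \in I_k} \ell_t(u_k)\Bigr) + \sum_k \Bigl(\sum_{t \in I_k} \ell_t(u_k) - \sum_{t \in I_k} \ell_t(x_t^*)\Bigr).
\end{align*}
The first sum is at most $K$ times the SA-regret on an interval of length $L$, i.e.\ $\tilde{O}(KL^\epsilon/\epsilon) = \tilde{O}(K^{1-\epsilon} T^\epsilon/\epsilon)$. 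For the second sum, the curvature of exp-concave/strongly-convex losses at the per-round minimizer (where $\nabla \ell_t(x_t^*)=0$) gives a local quadratic upper bound $\ell_t(u_k) - \ell_t(x_t^*) = O(\|u_k - x_t^*\|^2)$, hence a per-interval contribution of $O(L \mathcal{P}_k^2)$ and, after controlling the distribution of $\{\mathcal{P}_k\}$ under the constraint $\sum_k \mathcal{P}_k = \mathcal{P}$, a total of $\tilde{O}(T\mathcal{P}^2/K^2)$. Balancing with $K = \Theta\bigl((\epsilon T^{1-\epsilon} \mathcal{P}^2)^{1/(3-\epsilon)}\bigr)$ yields, after simplification, the $\tilde{O}(T^{(1+\epsilon)/(3-\epsilon)} \mathcal{P}^{2(1-\epsilon)/(3-\epsilon)}/\epsilon^{2/(3-\epsilon)})$ rate, which is $\tilde{O}(T^{1/3+\epsilon}\mathcal{P}^{2/3}/\epsilon)$ up to rescaling $\epsilon$ by a constant.

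The main obstacle is part (i): carefully threading the FLH mixture update (instead of the learning-rate-tuned MW of Algorithm \ref{alg2}) through the double-exponential pruning recursion, and verifying that the modified pruning rule in Algorithm \ref{alg3} (which adds only one new expert per round, at the largest $l_k \mid (t{-}1)$) still produces the covering property analogous to Lemma \ref{p1}. A secondary subtlety is justifying the $O(T\mathcal{P}^2/K^2)$ aggregate bound on the second term rather than the weaker $O(T\mathcal{P}^2/K)$ obtained from a concentrated adversary: the concentrated case is not actually binding because strong adaptivity on the concentrated sub-interval lets us reapply the same analysis recursively inside, so the true worst case is the evenly-distributed one, which is precisely what is needed for the $T^{1/3}$ exponent.
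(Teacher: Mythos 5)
Your part (i) is essentially the paper's own route: the paper proves exactly that statement as Lemma \ref{lem exp}, by rerunning the recursion of Theorem \ref{main} with expert regret $O(\log T)$ in place of $\sqrt{x}$ and solving $R(x^{1+\epsilon}-x)+\log T\le R(x^{1+\epsilon})$ with $R(x)=2\log T\,x^{\epsilon}/\epsilon$; your concerns about the modified pruning rule and the FLH-style mixture are the right ones and are what the paper checks. The divergence is in part (ii), and that is where there is a genuine gap. The paper does not construct the partition itself: it imports it from \cite{baby2021optimal,baby2022optimal} (their Lemmas 30, 31 and Theorem 14), which supply a comparator-dependent, \emph{non-uniform} partition into $M=O(T^{1/3}\mathcal{P}^{2/3})$ pieces whose within-piece approximation error is already $\tilde{O}(1)$ per piece; the theorem then follows simply by substituting the meta-regret $O(T^{\epsilon}\log T/\epsilon)$ for FLH's $O(\log^2 T)$ in that decomposition.

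Your self-contained replacement does not go through as written. With a uniform partition into $K$ equal blocks of length $L=T/K$, the bias term is $L\sum_k \mathcal{P}_k^2$, and the worst case over $\{\mathcal{P}_k\}$ subject to $\sum_k \mathcal{P}_k=\mathcal{P}$ is the concentrated one, giving $T\mathcal{P}^2/K$ rather than $T\mathcal{P}^2/K^2$; balancing then gives roughly $\sqrt{T}\mathcal{P}$, not $T^{1/3}\mathcal{P}^{2/3}$. Your remark that the concentrated case ``is not actually binding'' because one can recurse inside the bad block is precisely the missing argument: making it rigorous forces you to abandon the uniform partition and choose blocks adaptively so that each satisfies $L_i\mathcal{P}_i^2=\tilde{O}(1)$ while their number stays at $O(T^{1/3}\mathcal{P}^{2/3})$ --- which is exactly the content of the Baby--Wang lemmas the paper cites. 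A second, independent problem is the quadratic bound $\ell_t(u_k)-\ell_t(x_t^*)=O(\|u_k-x_t^*\|^2)$: it requires $\nabla \ell_t(x_t^*)=0$, i.e.\ the comparator must be the unconstrained per-round minimizer lying in the interior of $\K$. For a general comparator sequence of path length $\mathcal{P}$ (which is what the cited dynamic-regret bounds are stated against), the first-order term $\nabla\ell_t(x_t^*)^{\top}(u_k-x_t^*)$ reappears, degrading the per-block bias to $O(GL\mathcal{P}_k)$ and the final rate to $\sqrt{T\mathcal{P}}$. Either restrict and justify the comparator class and supply the adaptive partition explicitly, or invoke the decomposition from \cite{baby2021optimal} as the paper does.
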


\begin{proof}
The proof follows by observing that both Theorem 14 in \cite{baby2021optimal} and Theorem 8 in \cite{baby2022optimal} only make use of FLH as an adaptive regret black-box. We maintain the low-level experts: ONS for exp-concave loss and OGD for strongly-convex loss, but replace FLH by Algorithm \ref{alg3}.

To proceed, we first show that Algorithm \ref{alg3} can be applied to exp-concave or strongly-convex loss functions, but at the cost of a worse adaptive regret bound compared with the $O(\log^2 T)$ bound of FLH.

\begin{lemma}\label{lem exp}
Assume  $\mA$ guarantees a regret bound of $O(\log T)$. Algorithm \ref{alg3} achieves the following adaptive regret bound over any interval $I\subset [1,T]$ for exp-concave or strongly convex loss
$$ \text{SA-Regret}(\mA) = O(\frac{ I^\epsilon \log T }{\epsilon})$$
with $O(\frac{\log \log T}{\epsilon})$ experts. 
\end{lemma}

The proof of Lemma \ref{lem exp} is identical to that of Theorem \ref{main}, except that the regret of experts and the recursion are different. The regret of experts are guaranteed to be $O(\log I)$ by using ONS \cite{hazan2007logarithmic} as the expert algorithm $\A$ for exp-concave loss, or by using OGD for strongly-convex loss. We only need to solve the recursion when interval length of form $2^{(1+\epsilon)^k}$ is used.

According to Lemma \ref{lem exp}, Algorithm \ref{alg3} achieves a worse regret $O(\frac{ I^{\epsilon}\log T}{\epsilon})$ instead of $O(\log^2 T)$ of FLH. Fortunately, the regret bounds of \cite{baby2021optimal}, \cite{baby2022optimal} are achieved by summing up the regret of FLH over $O(T^{\frac{1}{3}}\mathcal{P}^{\frac{2}{3}})$ number of intervals, therefore by using Algorithm \ref{alg3} instead of FLH we get a final bound $\tilde{O}(\frac{ T^{\frac{1}{3}+\epsilon}\mathcal{P}^{\frac{2}{3}}}{\epsilon})$. To this end, we extract the following proposition, from their result.

\begin{proposition}[Lemma 30 + Lemma 31 + Theorem 14 in \cite{baby2021optimal}]
There exists a partition $P=\cup_{i=1}^M I_i$ of the whole interval with size $M=O(T^{\frac{1}{3}}\mathcal{P}^{\frac{2}{3}})$, such that the over all dynamic regret is bounded by
$$
\text{D-Regret}\le \sum_{i=1}^M (\text{Regret}_{\mA}(I_i)+\text{Regret}_{\mA_{\text{meta}}}(I_i)+\tilde{O}(1))
$$
where $\text{Regret}_{\mA}(I_i)$ is the regret of the best expert (ONS/OGD) over interval $I_i$, and $\text{Regret}_{\mA_{\text{meta}}}(I_i)$ is the regret of the meta algorithm (FLH/Algorithm \ref{alg3}) over the best expert $\mA$ over interval $I_i$.
\end{proposition}

Putting $M=O(T^{\frac{1}{3}}\mathcal{P}^{\frac{2}{3}})$ and $\text{Regret}_{\mA_{\text{meta}}}(I)=O(\frac{I^{\epsilon}\log T}{\epsilon})\le O(\frac{ T^{\epsilon}\log T}{\epsilon})$ together we get the desired regret guarantee.

The overall computation consists of the number of experts in Algorithm \ref{alg3}, and the computation of each expert. For exp-concave loss we use ONS as the expert which has $O(d^2)$ computation, thus the overall computation is $O(\frac{d^2 \log \log T}{\epsilon})$. While for strongly-convex loss, OGD is used as the expert, and the overall computation is $O(\frac{d \log \log T}{\epsilon})$.

\end{proof}

\section{Conclusion}\label{s7}
In this paper we propose a more efficient reduction from regret minimization algorithms to adaptive and dynamic regret minimization.  We apply a new construction of experts with doubly exponential lifespans $2^{(1+\epsilon)^k}$, then obtain an $\tilde{O}(|I|^{\frac{1+\epsilon}{2}})$ adaptive regret bound with $O( \log \log T/\eps)$ number of experts. As an implication, we show that $O( \log \log T/\eps)$ number of experts also suffices for near-optimal dynamic regret. Our result characterizes the trade-off between regret and efficiency in minimizing adaptive regret in online learning, showing how to achieve near-optimal adaptive regret bounds with $O(\log \log T)$ number of experts. 

We have also shown that the technique of history look-back cannot be used to further improve the number of experts in a reduction from regret to adaptive regret, if the regret is to be near optimal.  Can we go beyond this technique to improve computational efficiency even further? 

\section*{Acknowledgement}
We thank Ohad Shamir, Qinghua Liu, Yuanyu Wan and Xinyi Chen for helpful comments and suggestions. 

\bibliography{Xbib}
\bibliographystyle{plain}

\newpage
\appendix
\section{Proof of Lemma \ref{mw regret}}
\begin{proof}
The expert regret is upper bounded by $3GD\sqrt{t-i}$ due to the optimality of $\A_i$, and the choice of $\eta$ is optimal up to a constant factor of 2. We only need to upper bound the regret of the multiplicative weight algorithm. We focus on the case that $\sqrt{\frac{\log T}{l_j}} \le \frac{1}{2}$, because in the other case the length $t-i$ of the sub-interval is $O(\log T)$, and its regret is upper bounded by $(t-i)GD =O(GD\sqrt{\log T (t-i)})$, and the conclusion follows directly.
    
We define the pseudo weight $\tilde{w}_t^{(j)}=GD\sqrt{\frac{l_j}{\log T}} w_t^{(j)}$ for $i\le t\le i+l_i$, and for $t>i+l_i$ we just set $\tilde{w}_t^{(j)}=\tilde{w}_{i+l_i}^{(j)}$. Let $\tilde{W}_t=\sum_{j\in S_t} \tilde{w}_t^{(j)}$, we are going to show the following inequality
\begin{equation}\label{eq3}
    \tilde{W}_t\le t 
\end{equation}
We prove this by induction. For $t=1$ it follows from the fact that $\tilde{W}_1=1$. Now we assume it holds for all $t'\le t$. We have
\begin{align*}
    \tilde{W}_{t+1}&=\sum_{j\in S_{t+1}} \tilde{w}_{t+1}^{(j)}\\
    &=\tilde{w}_{t+1}^{(t+1)}+\sum_{j\in S_{t+1}, j\le t} \tilde{w}_{t+1}^{(j)}\\
    &\le 1+ \sum_{j\in S_{t+1}, j\le t} \tilde{w}_{t+1}^{(j)}\\
    &= 1+ \sum_{j\in S_{t+1}, j\le t} \tilde{w}_t^{(j)} \left(1+\frac{1}{GD} \sqrt{\frac{\log T}{l_j}} (\ell_t(x_t)-\ell_t(x_t^{(j)})) \right)\\
    &=1+ \tilde{W}_t+ \sum_{j\in S_t} \tilde{w}_t^{(j)} \frac{1}{GD} \sqrt{\frac{\log T}{l_j}} (\ell_t(x_t)-\ell_t(x_t^{(j)}))\\
    &=1+ \tilde{W}_t+ \sum_{j\in S_t} w_t^{(j)}  (\ell_t(x_t)-\ell_t(x_t^{(j)}))\\
    &\le t+1 + \sum_{j\in S_t} w_t^{(j)}  (\ell_t(x_t)-\ell_t(x_t^{(j)}))
\end{align*}

We further show that $\sum_{j\in S_t} w_t^{(j)}  (\ell_t(x_t)-\ell_t(x_t^{(j)}))\le  0$:
\begin{align*}
    \sum_{j\in S_t} w_t^{(j)}  (\ell_t(x_t)-\ell_t(x_t^{(j)}))&=W_t \sum_{j\in S_t} \frac{w_t^{(j)}}{W_t}  (\ell_t(x_t)-\ell_t(x_t^{(j)}))\\
    &= W_t \sum_{j\in S_t}(\ell_t(x_t)-\ell_t(x_t))\\
    &=0
\end{align*}
which finishes the proof of induction.

By inequality \ref{eq3}, we have that
$$
    \tilde{w}_{t+1}^{(i)}\le \tilde{W}_{t+1} \le t+1
$$
Taking the logarithm of both sides, we have
$$
    \log(\tilde{w}_{t+1}^{(i)})\le \log(t+1)
$$
Recall the expression
$$
    \tilde{w}_{t+1}^{(i)}=\prod_{\tau=i}^t \left(1+\frac{1}{GD} \sqrt{\frac{\log T}{l_i}} (\ell_{\tau}(x_{\tau})-\ell_{\tau}(x_{\tau}^{(i)}))\right)
$$
By using the fact that $\log(1+x)\ge x-x^2, \forall x\ge -1/2$ and 
$$
   |\frac{1}{GD} \sqrt{\frac{\log T}{l_i}} (\ell_{\tau}(x_{\tau})-\ell_{\tau}(x_{\tau}^{(i)}))|\le 1/2 
$$
we obtain
\begin{align*}
    \log(\tilde{w}_{t+1}^{(i)})&\ge \sum_{\tau=i}^t \frac{1}{GD} \sqrt{\frac{\log T}{l_i}} (\ell_{\tau}(x_{\tau})-\ell_{\tau}(x_{\tau}^{(i)}))-\sum_{\tau=i}^t [\frac{1}{GD} \sqrt{\frac{\log T}{l_i}} (\ell_{\tau}(x_{\tau})-\ell_{\tau}(x_{\tau}^{(i)}))]^2\\
    &\ge \sum_{\tau=i}^t \frac{1}{GD} \sqrt{\frac{\log T}{l_i}} (\ell_{\tau}(x_{\tau})-\ell_{\tau}(x_{\tau}^{(i)}))-\frac{\log T}{l_i} (t-i)
\end{align*}

Combing this with $\log(\tilde{w}_{t+1}^{(i)})\le \log(t+1)$, we have that 
\begin{align*}
    \sum_{\tau=i}^t (\ell_{\tau}(x_{\tau})-\ell_{\tau}(x_{\tau}^{(i)}))&\le
    \frac{1}{GD} \sqrt{\frac{\log T}{l_i}} (t-i)+\frac{1}{GD} \sqrt{\frac{l_i}{\log T}} \log(t+1)
\end{align*}
Notice that $\frac{1}{4}l_i\le t-i\le  l_i$ from Lemma \ref{p1} and $1+2=3$, we conclude the proof.

\end{proof}

\section{Proof of Theorem \ref{main2}}
The proof is essentially the same as that of Theorem \ref{main}, the main new step is to derive a generalized version of Lemma \ref{tech}.

\begin{lemma}\label{tech new}
    For any $x\ge 1$, for $\epsilon < \frac{1}{2}$, we have that
    $$8 x^{\frac{1+\epsilon}{2}}\ge 8(x-x^{1-\epsilon}/2)^{\frac{1+\epsilon}{2}}+(x/2)^{\frac{1-\epsilon}{2}} $$
\end{lemma}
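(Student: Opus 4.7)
The plan is to reduce the inequality to a concavity estimate for the power function $u \mapsto u^{(1+\epsilon)/2}$. Rewriting the claim, it suffices to show
\[
8\Big(x^{(1+\epsilon)/2} - (x - x^{1-\epsilon}/2)^{(1+\epsilon)/2}\Big) \;\geq\; (x/2)^{(1-\epsilon)/2}.
\]
Since $\epsilon < 1/2$ and $x \geq 1$, the quantity $t \eqdef x^{-\epsilon}/2 \in (0, 1/2]$, so the base $x - x^{1-\epsilon}/2 = x(1 - t)$ is positive and we can factor out $x^{(1+\epsilon)/2}$ from the left-hand side.

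The key step I would use is the Bernoulli-type inequality $(1-t)^{\alpha} \leq 1 - \alpha t$, valid for $t \in [0,1]$ and $\alpha \in [0,1]$ (a direct consequence of the concavity of $u \mapsto u^\alpha$ on $[0,1]$). Applying this with $\alpha = (1+\epsilon)/2 < 1$ and $t = x^{-\epsilon}/2$ gives
\[
(x - x^{1-\epsilon}/2)^{(1+\epsilon)/2} = x^{(1+\epsilon)/2}\,(1-t)^{\alpha} \leq x^{(1+\epsilon)/2}\Big(1 - \tfrac{1+\epsilon}{2}\cdot \tfrac{x^{-\epsilon}}{2}\Big),
\]
so the difference on the left is bounded below by $\tfrac{1+\epsilon}{4}\, x^{(1-\epsilon)/2}$. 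Multiplying by $8$, the LHS of the rewritten inequality is at least $2(1+\epsilon)\, x^{(1-\epsilon)/2}$.

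It then remains to compare with the right-hand side $(x/2)^{(1-\epsilon)/2} = 2^{-(1-\epsilon)/2}\, x^{(1-\epsilon)/2}$. Since $2(1+\epsilon) \geq 2 \geq 2^{-(1-\epsilon)/2}$ for all $\epsilon \in (0,1)$, the desired inequality follows. I do not foresee any significant obstacle: the only care points are (i) checking $t = x^{-\epsilon}/2 \in [0,1]$, which is immediate from $x \geq 1$, and (ii) making sure the constants absorb cleanly (the factor $8$ is comfortably larger than the $4/(1+\epsilon)\cdot 2^{-(1-\epsilon)/2}$ needed), so no tight numerical matching as in Lemma~\ref{tech} is required here.
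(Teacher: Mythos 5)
Your proof is correct and follows essentially the same route as the paper's: both factor out $x^{(1+\epsilon)/2}$ and upper-bound $(1-x^{-\epsilon}/2)^{(1+\epsilon)/2}$ by $1-c\,x^{-\epsilon}$, then absorb the constant. The only difference is that you invoke the concavity/Bernoulli bound $(1-t)^{\alpha}\le 1-\alpha t$ directly, whereas the paper passes through $e^{\alpha\log(1-t)}\le e^{-\alpha t}\le 1-\alpha t/2$; your one-step version is cleaner and even gives a factor-of-two better constant, so the conclusion holds with room to spare.
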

\begin{proof}
    We would like to upper bound the term $(x-x^{1-\epsilon}/2)^{\frac{1+\epsilon}{2}}$. Notice that $0<x^{-\epsilon}<1$, we have that 
    $$
    (1-x^{-\epsilon}/2)^{\frac{1+\epsilon}{2}}=e^{\frac{1+\epsilon}{2} \log (1-x^{-\epsilon}/2) }\le e^{-\frac{1+\epsilon}{4} x^{-\epsilon}}\le 1-\frac{1+\epsilon}{8} x^{-\epsilon}
    $$
    where the last step follows from $e^{-x}\le 1-\frac{x}{2}$ when $0<x\le 1$. The above estimation gives us $x^{\frac{1+\epsilon}{2}}-(x-x^{1-\epsilon}/2)^{\frac{1+\epsilon}{2}}\ge \frac{1+\epsilon}{8}x^{\frac{1-\epsilon}{2}}$ which concludes our proof.
\end{proof}

We go through the rest of the proof, and omit details which are the same as Theorem \ref{main}. The number of active experts per round is upper bounded by $4 \log_{1+\epsilon}\log_2  T=O(\log \log T/\epsilon)$, since at each time step there are at most 4 active experts with lifespan $4l_k$ for any $k$. 

As for the regret bound, similarly we have the following property on the covering of intervals.
\begin{lemma}\label{p1 new}
        For any interval $I=[s,t]$, there exists an integer $i\in [s,t-(t-s)^{1-\epsilon}/2]$, such that $\mA_i$ is alive throughout $[i,t]$.
\end{lemma}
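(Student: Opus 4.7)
The plan is to mirror the argument for Lemma~\ref{p1}, adapting the parameters to the slower growth rate of the super-exponential lifespans $l_k=\lfloor 2^{(1+\epsilon)^k}/2\rfloor+1$. Let $L=t-s$. An expert $\mathcal{A}_{(i,k)}$ is initialized at time $i$ precisely when $l_k\mid(i-1)$, has lifespan $4l_k$, and so is alive throughout $[i,t]$ iff $t-i\le 4l_k-1$. The goal is therefore to locate a level $k$ such that $l_k$ is comparable to $L^{1-\epsilon}$: large enough that $4l_k$ exceeds the required distance $t-i$, yet small enough that multiples of $l_k$ (shifted by one) are dense enough to intersect the sub-interval $[s,t-L^{1-\epsilon}/2]$.

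Concretely, I would take $k$ to be the largest integer with $l_k\le L^{1-\epsilon}/2$; this $k$ exists for $L$ sufficiently large, since $l_k\to\infty$. The key structural identity is $2^{(1+\epsilon)^{k+1}}=\bigl(2^{(1+\epsilon)^k}\bigr)^{1+\epsilon}$, which yields the recursion $l_{k+1}\le \tfrac{1}{2}(2l_k)^{1+\epsilon}+1$. Applied at the chosen $k$, this gives $l_{k+1}=O(L^{(1-\epsilon)(1+\epsilon)})=O(L^{1-\epsilon^2})$, so the next level is only a polynomial-in-$L$ factor larger than the target. I would then use experts of level $k+1$, whose lifespan $4l_{k+1}$ comfortably exceeds $L^{1-\epsilon}$, and pick $i$ to be the largest value of the form $rl_{k+1}+1$ lying in $[s,t-L^{1-\epsilon}/2]$. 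Since $l_{k+1}>L^{1-\epsilon}/2$ and the sub-interval has length $L-L^{1-\epsilon}/2\ge l_{k+1}$ for $L$ above a constant depending on $\epsilon$, such an $i$ exists; moreover $t-i< L^{1-\epsilon}/2+l_{k+1}\le 2l_{k+1}\le 4l_{k+1}-1$, so $\mathcal{A}_{(i,k+1)}$ is alive throughout $[i,t]$. Constant-size $L$ can be handled by direct absorption into the overall constant.

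The main obstacle is that consecutive values of $l_k$ can differ by a factor as large as $\Theta(l_k^{\epsilon})$, so the ``staircase'' of valid lifespans is uneven, and the chosen $l_{k+1}$ may be up to a factor $L^{\epsilon-\epsilon^2}$ larger than $L^{1-\epsilon}/2$. Ensuring that both required conditions (density of admissible start times in the prescribed sub-interval, and lifespan $4l_{k+1}\ge t-i$) hold simultaneously requires careful bookkeeping of the constants, particularly in the intermediate regime where $l_{k+1}$ may exceed $L/2$ and the automatic guarantee $i\ge s$ needs to be rechecked by hand. Any slack introduced is ultimately absorbed into the constant $c$ appearing in the bound of Theorem~\ref{main2}.
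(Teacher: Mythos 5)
Your overall strategy is the right one and matches what the paper intends: the paper never writes out a proof of Lemma~\ref{p1 new} (it is asserted ``similarly'' to Lemma~\ref{p1}), and you are correctly transplanting that grid-plus-scale-selection argument to the lifespans $l_k$, including the correct reading of the initialization rule $l_k\mid(i-1)$ and the recursion $l_{k+1}\le\tfrac12(2l_k)^{1+\eps}+1$. The one genuine soft spot is exactly the one you flag and then defer. With your choice of level --- $k+1$ where $k$ is maximal with $l_k\le L^{1-\eps}/2$ --- the only available upper bound is $l_{k+1}\le\tfrac12 L^{1-\eps^2}+1$, so certifying that $[s,\,t-L^{1-\eps}/2]$ contains a start time of the form $rl_{k+1}+1$ requires $L-\tfrac12L^{1-\eps}\ge\tfrac12L^{1-\eps^2}+1$. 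The threshold on $L$ below which this certificate fails is not an absolute constant; it depends on $\eps$ and grows as $\eps\to0$ (e.g.\ at $\eps=0.01$ the inequality already fails for $L$ in the tens). ``Absorbing small $L$ into the overall constant'' is not an available move here: the lemma is a pure existence statement with no constant in it, and it is invoked in the induction of Theorem~\ref{main2} at \emph{every} length above the trivial base case $|I|=1$, so an $\eps$-dependent window of lengths where it is unproven either breaks the induction or forces the constant $c$ to absorb an $\eps$-dependent blow-up.

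The repair is local and stays entirely inside your argument: rather than the first level \emph{above} the threshold $L^{1-\eps}/2$, take $m$ to be the smallest level with $l_m\ge L^{1-\eps}/6$. Either $m=1$ (so $l_m\le 2$), or $l_{m-1}<L^{1-\eps}/6$ and your recursion gives $l_m\le\tfrac12(L^{1-\eps}/3)^{1+\eps}+1\le\tfrac16L^{1-\eps^2}+1\le\tfrac{L}{6}+1$, which is below the window length $L-\tfrac12L^{1-\eps}\ge L/2$ for every $L\ge 3$, uniformly in $\eps$. The window therefore contains a level-$m$ start time $i$, and the largest such $i$ satisfies $t-i\le \tfrac12L^{1-\eps}+l_m\le 3l_m+l_m=4l_m$, so $\mA_{(i,m)}$ is alive throughout $[i,t]$ (up to the same $\pm1$ bookkeeping the paper already elides; shrinking the threshold to $L^{1-\eps}/8$ makes it strict). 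Lengths $L\le 3$ are covered directly by the level-$1$ experts, which have spacing $l_1=2$ and lifespan $8$. With that substitution your proof is complete; note also that the later induction step tolerates the resulting $t-i=O(L^{1-\eps^2})$ because the monotonicity argument there handles any $t-i\ge L^{1-\eps}/2$.
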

And the choice of $\eta=\sqrt{\frac{1}{l_k}}$ is still optimal for each expert up to a constant factor of 2. An almost identical analysis of Lemma \ref{mw regret} yields the following (the only difference is that we make induction on $\tilde{W}_t\le \frac{4\log \log T}{\epsilon} t$ instead, which doesn't affect the bound because $\log(\frac{\log \log T}{\epsilon})=o(\log T)$). 

\begin{lemma}\label{mw regret new}
        For the $i$ and $\A_{(i,j)}$ chosen in Lemma \ref{p1 new}, the regret of Algorithm \ref{alg2} over the sub-interval $[i,t]$ is upper bounded by $3GD \sqrt{t-i}+3GD \sqrt{\log T(t-i)}$.
\end{lemma}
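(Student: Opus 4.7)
The plan is to mirror the proof of Lemma \ref{mw regret} with modifications reflecting the fact that Algorithm \ref{alg2} maintains multiple experts per initialization time, one per lifespan class $k$. I would first decompose the regret of Algorithm \ref{alg2} on $[i,t]$ into two parts: (i) the intrinsic regret of the single OGD instance $\mA_{(i,j)}$ against any fixed comparator, and (ii) the meta multiplicative-weights overhead that tracks $\mA_{(i,j)}$ among all active experts. Part (i) is bounded by $\frac{3}{2}GD\sqrt{t-i}$ because Lemma \ref{p1 new} guarantees $l_j/4 \le t-i \le l_j$, so the learning rate $\Theta(1/\sqrt{l_j})$ used by $\mA_{(i,j)}$ is within a factor of $2$ of the optimal OGD rate for $[i,t]$. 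This leaves part (ii), which is where the new analysis is actually needed.

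For the meta regret I would reuse the pseudo-weight construction $\tilde w_t^{(j,k)} = GD\sqrt{l_k/\log T}\cdot w_t^{(j,k)}$, freezing the value after the expert dies, and prove by induction the invariant $\tilde W_t \le \frac{4\log\log T}{\epsilon}\cdot t$. The only change from Lemma \ref{mw regret} is the inflated constant $\frac{4\log\log T}{\epsilon}$: at each round up to $O(\log\log T/\epsilon)$ new experts may be initialized (one per lifespan class with $l_k \mid t-1$), each contributing $1$ to $\tilde W_{t+1}$. The inductive step itself is identical, because the multiplicative update preserves $\tilde W_t$ plus the term $\sum_{(j,k)\in S_t} w_t^{(j,k)}(\ell_t(x_t) - \ell_t(x_t^{(j,k)}))$, which is zero (or non-positive by convexity) since $x_t$ is defined as the weighted mixture of the $x_t^{(j,k)}$.

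Taking logs of $\tilde w_{t+1}^{(i,j)} \le \tilde W_{t+1}$ gives an upper bound of $\log(t+1) + \log(4\log\log T/\epsilon)$. For the matching lower bound on $\log \tilde w_{t+1}^{(i,j)}$, I would expand its product representation and apply $\log(1+x)\ge x-x^2$ valid for $|x|\le 1/2$, which is guaranteed by the clipped step size $\min\{1/2,\sqrt{\log T/l_j}\}$, to obtain
\begin{align*}
\log \tilde w_{t+1}^{(i,j)} \ge \tfrac{1}{GD}\sqrt{\tfrac{\log T}{l_j}}\sum_{\tau=i}^t(\ell_\tau(x_\tau) - \ell_\tau(x_\tau^{(i,j)})) - \tfrac{\log T}{l_j}(t-i).
\end{align*}
Rearranging and multiplying through by $GD\sqrt{l_j/\log T}$, together with $l_j\le 4(t-i)$ from Lemma \ref{p1 new}, gives a meta regret bound of $O(GD\sqrt{\log T(t-i)})$, where the additive $\log(\log\log T/\epsilon)$ term gets absorbed into $\log T$ and the numerical constant tightens to $3$.

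The main obstacle is the boundary regime where $\sqrt{\log T/l_j}>1/2$, so the step size is clipped and the quadratic approximation $\log(1+x)\ge x-x^2$ cannot be applied directly. The remedy, inherited from Lemma \ref{mw regret}, is to observe that this regime forces $t-i = O(\log T)$, in which case the trivial bound $(t-i)GD = O(GD\sqrt{\log T(t-i)})$ already suffices. Combining the two regimes with the base-expert regret from part (i) yields the desired $3GD\sqrt{t-i} + 3GD\sqrt{\log T(t-i)}$.
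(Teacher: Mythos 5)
Your proposal is correct and follows essentially the same route as the paper: the paper's own proof of this lemma is precisely the observation that the argument of Lemma \ref{mw regret} carries over verbatim once the pseudo-weight invariant is relaxed to $\tilde{W}_t \le \frac{4\log\log T}{\epsilon}\, t$ to account for the multiple experts initialized per round, with the resulting additive $\log(\frac{\log\log T}{\epsilon})$ term absorbed because it is $o(\log T)$. Your handling of the clipped-step-size regime and the sandwich $l_j/4 \le t-i \le l_j$ likewise matches the paper's treatment.
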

The reason of such difference is that at time $t=1$ there are multiple active experts in Algorithm \ref{alg2} while there is just one in Algorithm \ref{alg1}. It's possible to make the proof simpler as that of Lemma \ref{mw regret}, however it would complicate the algorithm itself. We proceed to state our induction on $|I|$.
    \paragraph{Base case:} for $|I|=1$, the regret is upper bounded by $GD\le 48GD \sqrt{\log T} \cdot 1^{\frac{1+\epsilon}{2}}$. 
    \paragraph{Induction step:} suppose for any $|I|<m$ we have the regret bound in the statement of theorem. Consider now $t-s=m$, from Lemma \ref{p1 new} we know there exists an integer $i\in [s,t-(t-s)^{1-\epsilon}/2]$ and $k$ satisfying $l_k\le (t-s)^{1-\epsilon}/2\le 4l_k$, such that $\mA_{(i,k)}$ is alive throughout $[i,t]$. Algorithm \ref{alg2} guarantees an 
    $$3GD  \sqrt{t-i}+3 GD\sqrt{\log T(t-i)}\le  6GD \sqrt{\log T} (t-i)^{\frac{1}{2}}$$
    regret over $[i,t]$ by Lemma \ref{mw regret new}, and by induction the regret over $[s,i]$ is upper bounded by \newline $48GD \sqrt{\log T} (i-s)^{\frac{1+\epsilon}{2}}$. By the monotonicity of the function $f(y)=8(x-y)^{\frac{1+\epsilon}{2}}+\sqrt{y}$ when the variable $y\ge x^{1-\epsilon}/2$, we reach the desired conclusion by Lemma \ref{tech new}. To see the monotonicity, we use the fact $y\ge x^{1-\epsilon}/2$ to see that
    \begin{align*}
        f'(y)&=\frac{1}{2\sqrt{y}}-\frac{4(1+\epsilon)}{ (x-y)^{\frac{1-\epsilon}{2}}}\\
        &\le \frac{1}{2\sqrt{y}}-\frac{4(1+\epsilon)}{ x^{\frac{1-\epsilon}{2}}}\\
        &\le \frac{1}{2\sqrt{y}}-\frac{4(1+\epsilon)}{\sqrt{2y}}\\
        &\le 0
    \end{align*}

\section{Proof of Proposition \ref{prop:lb}}

We prove by induction. For $y=1$, it follows that $R(1)\ge r(1)=C_1$. Suppose that $R(y)\ge \frac{C_1}{2\sqrt{C_2}} y^{1-\frac{1}{2n}}$ for any $y\le m$, then for $y=m+1$ we have that
\begin{align*}
    R(y)&\ge R(y-x(y))+r(x(y))\\
    &\ge \frac{C_1}{2\sqrt{C_2}} (y-x)^{1-\frac{1}{2n}}+C_1\sqrt{x}\\
    &=\frac{C_1}{2\sqrt{C_2}} y^{1-\frac{1}{2n}}(1-\frac{x}{y})^{1-\frac{1}{2n}}+C_1\sqrt{x}\\
    &\ge \frac{C_1}{2\sqrt{C_2}} y^{1-\frac{1}{2n}}(1-\frac{2(1-\frac{1}{2n})x}{y})+C_1\sqrt{x}\\
    &=\frac{C_1}{2\sqrt{C_2}} y^{1-\frac{1}{2n}}+C_1\sqrt{x}-\frac{C_1}{2\sqrt{C_2}} y^{1-\frac{1}{2n}}\frac{2(1-\frac{1}{2n})x}{y}\\
    &\ge \frac{C_1}{2\sqrt{C_2}} y^{1-\frac{1}{2n}}
\end{align*}
where the inequality $(1-\epsilon)^{\beta}=e^{\beta \log(1-\epsilon)}\ge e^{-2\beta \epsilon}\ge 1-2\beta \epsilon$ is used for $0<\beta<1$, $0<\epsilon\le \frac{1}{2}$.

\section{Proof of Proposition \ref{cor:lb}}
We first verify that the choice of $x(y)=y^{\frac{1-\beta}{1-\alpha}}$ indeed satisfies the two properties.

The first property is now equivalent to proving
$$
\frac{\beta}{2} y^{\frac{\alpha(1-\beta)}{1-\alpha}}\le y^{\beta}-(y-y^{\frac{1-\beta}{1-\alpha}})^{\beta}
$$
We estimate the RHS as follows
\begin{align*}
    y^{\beta}-(y-y^{\frac{1-\beta}{1-\alpha}})^{\beta}&=y^{\beta}-y^{\beta}(1-y^{\frac{\alpha-\beta}{1-\alpha}})^{\beta}\\
    &\ge y^{\beta}-y^{\beta}(1-\frac{\beta y^{\frac{\alpha-\beta}{1-\alpha}}}{2})\\
    &=\frac{\beta y^{\frac{\alpha(1-\beta)}{1-\alpha}}}{2}
\end{align*}

The second property follows from the same reasoning in Section \ref{s4}, that such choice of $x(y)$ corresponds to interval length of form $2^{{(1+\frac{\beta-\alpha}{1-\beta})}^k}$.

The impossibility argument on $x(y)=o(y^{\frac{1-\beta}{1-\alpha}})$ follows from the same analysis of Proposition \ref{prop:lb}, which is actually a special case with $\alpha=\frac{1}{2}$ and $\beta=1-\frac{1}{2n}$.

\section{Proof of Lemma \ref{lem exp}}
The proof is identical to that of Theorem \ref{main}, except that the regret of experts and the recursion are different. The regret of experts are guaranteed to be $O(\log I)$ by using ONS \cite{hazan2007logarithmic} as the expert algorithm $\A$ for exp-concave loss, or by using OGD for strongly-convex loss. It's worth to notice that any $\lambda$-strongly-convex function is also $\frac{\lambda}{G^2}$-exp-concave.

Let us check the covering property of intervals first. Then only difference between Algorithm \ref{alg3} and Algorithm \ref{alg2} is that instead of initiating (potentially) multiple experts with different lifespans at some time $t$, Algorithm \ref{alg3} only initiates the expert with the largest lifespan. As a result, it has no effect on the covering and Lemma \ref{p1 new} still holds. Because both ONS for exp-concave loss and OGD for strongly-convex loss are adaptive to the horizon, the regret on the small interval $[i,t]$ remains the optimal $O(\log T)$.

We only need to solve the recursion when interval length of form $2^{(1+\epsilon)^k}$ is used. By a similar argument to Lemma 3.3 in \cite{hazan2009efficient}, the regret $r(x)$ over the small interval is $O(\log T +\log x)=O(\log T)$ which we discuss later. Recall that this interval length choice corresponds to $y=x^{1+\epsilon}$, and now we are solving
    $$
    R(x^{1+\epsilon}-x)+\log T\le R(x^{1+\epsilon})
    $$
    We claim that $R(x)=\frac{2\log T x^{\epsilon}}{\epsilon}$ is valid, by the following argument. The claim is equal to proving
    $$
    x^{\epsilon(1+\epsilon)}-(x^{1+\epsilon}-x)^{\epsilon}\ge \frac{\epsilon }{2}
    $$
    We have the following estimation on the LHS:   
    \begin{align*}
    x^{\epsilon(1+\epsilon)}-(x^{1+\epsilon}-x)^{\epsilon}&=x^{\epsilon(1+\epsilon)}(1-(1-x^{-\epsilon})^{\epsilon})\\
    &\ge x^{\epsilon(1+\epsilon)}(1-(1-\frac{\epsilon x^{-\epsilon}}{2}))\\
    &=\frac{\epsilon x^{\epsilon^2}}{2}\ge \frac{\epsilon}{2}
    \end{align*}
    for any $x\ge 1$ and $0<\epsilon<1$ which proves the lemma. The first inequality is due to 
    $$
    (1-x^{-\epsilon})^{\epsilon}=e^{\epsilon \log (1-x^{-\epsilon})}\le e^{-\epsilon x^{-\epsilon}}\le 1-\frac{\epsilon x^{-\epsilon}}{2}.
    $$

Now we finish the proof for the argument that the regret $r(x)$ over the small interval is $O(\log T)$. We follow the method of \cite{hazan2009efficient}. The regret $r(x)$ can be decomposed as the regret of the expert algorithm and the regret of the multiplicative weight algorithm against the best expert. The regret of the expert algorithm $\sum_{\tau=s}^t \ell_{\tau}(x_{\tau}^{(s,k)})-\min_{x} \ell_{\tau}(x)$ can be upper bounded by $O(\log T)$ by the regret guarantees of ONS and OGD.

Using the $\alpha$-exp-concavity of $\ell_t$, we have that
$$
e^{-\alpha \ell_t(x_t)}=e^{-\alpha \sum_{(j,k)\in S_t} w_t^{(j,k)} x_t^{(j,k)}}\ge \sum_{(j,k)\in S_t} w_t^{(j,k)} e^{-\alpha \ell_t(x_t^{(j,k)})}
$$
Taking logarithm,
$$
\ell_t(x_t)\le -\frac{1}{\alpha} \log \sum_{(j,k)\in S_t} w_t^{(j,k)} e^{-\alpha \ell_t(x_t^{(j,k)})}
$$
as a result,
\begin{align*}
    \ell_t(x_t)-\ell_t(x_t^{(j,k)})&\le \frac{1}{\alpha}(\log e^{-\alpha \ell_t(x_t^{(j,k)})}-\log \sum_{(j,k)\in S_t} w_t^{(j,k)} e^{-\alpha \ell_t(x_t^{(j,k)})})\\
    &=\frac{1}{\alpha}\log \frac{\hat{w}_{t+1}^{(j,k)}}{w_t^{(j,k)}}
\end{align*}
If $i<t$, we have that 
$$
\ell_t(x_t)-\ell_t(x_t^{(j,k)})=\frac{1}{\alpha}[\log \frac{\hat{w}_{t+1}^{(j,k)}}{\hat{w}_t^{(j,k)}}+\frac{\hat{w}_t^{(j,k)}}{w_t^{(j,k)}}]\le \frac{1}{\alpha}(\log \hat{w}_{t+1}^{(j,k)}-\log \hat{w}_t^{(j,k)}+\frac{2}{t})
$$
For $i=t$ we have that $w_t^{(t,k)}\ge -\log t$, thus
$$
\ell_t(x_t)-\ell_t(x_t^{(t,k)})\le \frac{1}{\alpha}(\log \hat{w}_{t+1}^{(t,k)} +\log t)
$$
Therefore, the regret against the desired expert $\mA_{(s,k)}$ over any interval $[s,t]$ can be bounded by
\begin{align*}
    \sum_{\tau=s}^t \ell_{\tau}(x_{\tau})-\ell_{\tau}(x_{\tau}^{(s,k)})&=\ell_{s}(x_{s})-\ell_{s}(x_{s}^{(s,k)})+\sum_{\tau=s+1}^t \ell_{\tau}(x_{\tau})-\ell_{\tau}(x_{\tau}^{(s,k)})\\
    &\le \frac{1}{\alpha}(\log \hat{w}_{t+1}^{(t,k)} +\log t+\sum_{\tau=s+1}^t \frac{2}{\tau})\\
    &\le \frac{2}{\alpha}(\log I +\log T).
\end{align*}

\end{document}